\documentclass[lettersize,journal]{IEEEtran}  
\usepackage{microtype}
\usepackage{graphicx}
\usepackage{booktabs}
\usepackage{multirow}
\usepackage{algorithm}
\usepackage{algorithmic}
\usepackage[caption=false,font=normalsize,labelfont=sf,textfont=sf]{subfig}
\usepackage{float}
\usepackage{placeins}
\usepackage{bm}
\usepackage{amsmath}
\usepackage{amssymb}
\usepackage{mathtools}
\usepackage{amsthm}
\usepackage{cite}
\usepackage{stfloats}

\theoremstyle{plain}
\newtheorem{theorem}{Theorem}[section]

\theoremstyle{definition}
\newtheorem{definition}[theorem]{Definition}

\theoremstyle{remark}

\begin{document}

\title{
Sim2O: Efficient Offline-to-Online MARL via Joint Action Composition
}
\author{}

\author{Bingchang Song$^1$, Yiqin Yang$^2$\\
$^1$ Department of Computer Science and Engineering, Washington University in St. Louis\\
$^2$ The Key Laboratory of Cognition and Decision Intelligence for Complex Systems, Institute of Automation, Chinese Academy of Sciences, Beijing, China
\thanks{Corresponding author: Yiqin Yang.}%
}

\maketitle

\begin{abstract}
Offline-to-online adaptation serves as a pivotal paradigm for mitigating the prohibitive cost of online exploration by bootstrapping reinforcement learning from offline datasets. While this paradigm has been extensively studied in single-agent settings, its extension to Multi-Agent Reinforcement Learning (MARL) remains largely unexplored, despite its critical relevance to complex coordinated decision-making. To bridge this gap, we introduce Sim2O, an elegant and minimalist framework for offline-to-online MARL. Rather than treating adaptation as a monolithic joint decision, Sim2O conceptualizes it as a compositional process. Specifically, candidate joint actions are synthesized by dynamically blending offline and online action proposals across agents. By leveraging a centralized value function to evaluate these hybrid combinations, Sim2O identifies high-value coordination strategies without requiring auxiliary training objectives or structural overhead. Empirical evaluations across diverse benchmarks demonstrate that Sim2O significantly outperforms existing baselines, underscoring that a minimalist design is not only viable but highly effective for multi-agent offline-to-online adaptation.
\end{abstract}
\begin{IEEEkeywords}
Offline-to-online reinforcement learning, multi-agent reinforcement learning, coordinated policy search.
\end{IEEEkeywords}

\begin{figure*}[t]
    \centering
    \vspace{2mm}
    \includegraphics[width=0.9\textwidth]{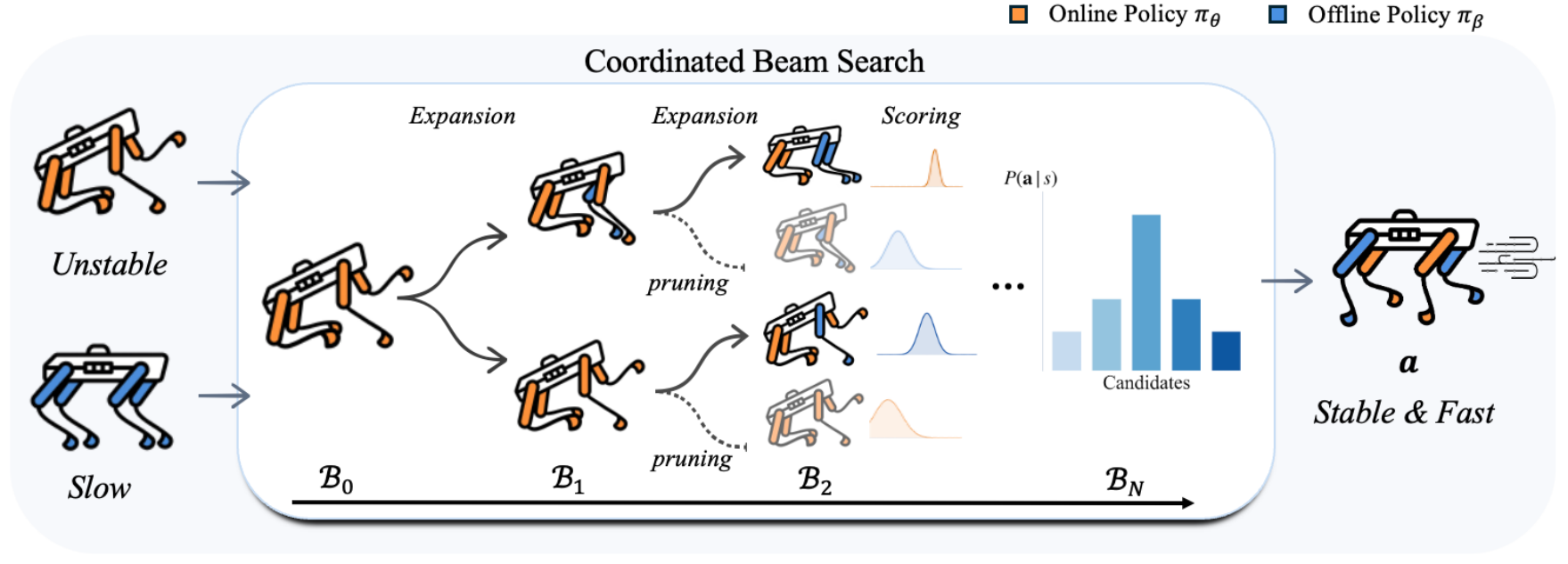}

    \caption{
    Framework of Sim2O.
    Joint actions are constructed through agent-level alternatives from frozen offline policies $\pi_\beta$ and learnable online policies $\pi_\theta$, and selected via \textit{Coordinated Beam Search} using the centralized critic $Q_{\text{tot}}$.}
    \label{fig:framework}
\end{figure*}

\section{Introduction}
\label{sec:intro}

Offline-to-online reinforcement learning has emerged as a pivotal paradigm for facilitating sample-efficient policy optimization in complex decision-making systems~\cite{levine2020offline, fu2020d4rl}. By capitalizing on static datasets for policy initialization and subsequently fine-tuning them via online interaction, this paradigm substantially mitigates the prohibitive costs and safety risks associated with tabula rasa exploration~\cite{nair2020awac, kostrikov2022iql}. Such bootstrapping capabilities are paramount in real-world deployments where physical interaction is financially restrictive, temporally constrained, or safety-critical—rendering purely online learning untenable~\cite{kalashnikov2018qtopt, levine2020offline}.
Within single-agent regimes, offline-to-online adaptation has witnessed extensive maturation. One prominent line of research employs policy constraints or value regularization to anchor the learning trajectory within the support of the offline data distribution, as exemplified by AWAC~\cite{nair2020awac}, IQL~\cite{kostrikov2022iql}, and TD3+BC~\cite{fujimoto2021td3bc}. Conversely, a complementary trajectory adopts an adaptive expansion perspective, dynamically interleaving conservative offline policies with exploratory online behaviors, as explored in PEX~\cite{zhang2023pex} and BOORL~\cite{hu2024boorl}. Despite these algorithmic variances, existing approaches fundamentally share a monolithic abstraction: they conceptualize offline-to-online adaptation as a unified, global decision.

However, directly transplanting single-agent paradigms into the multi-agent domain introduces a fundamental architectural friction, which we refer to as the granularity mismatch. Cooperative multi-agent systems are inherently heterogeneous, characterized by disparate agent roles, localized observations, and asymmetrical learning dynamics. Consequently, enforcing a synchronized, joint-level offline-to-online transition induces a severe coordination bottleneck—either arresting beneficial exploration or destabilizing team cohesion via premature, uncoordinated adaptation.
To resolve this friction, we introduce Sim2O, a conceptually elegant yet highly potent framework tailored for fine-grained multi-agent offline-to-online adaptation. Sim2O empowers individual agents with localized flexibility during candidate action synthesis, while seamlessly preserving centralized, value-based coordination. This structural refinement allows the collective system to dynamically and adaptively balance conservative exploitation and exploratory behaviors during the online fine-tuning phase. Extensive evaluations on the challenging MA-MuJoCo benchmarks~\cite{dewitt2020decentralized} demonstrate that Sim2O consistently surpasses competitive baselines in both learning stability and asymptotic performance. To our knowledge, this work constitutes the first systematic study to analyze and rectify the granularity mismatch in offline-to-online multi-agent reinforcement learning.

Our primary contributions are summarized as follows:
(1) Conceptual Insight: We identify and formalize the granularity mismatch inherent in existing offline-to-online MARL, demonstrating how monolithic, joint-level adaptation stifles coordination in heterogeneous environments.
(2) Algorithmic Framework: We propose Sim2O, a minimalist yet powerful method that facilitates fine-grained, agent-level adaptation flexibility while maintaining rigorous centralized value-based coordination.
(3) Empirical Validation: Through comprehensive evaluations on various complex tasks, we demonstrate that Sim2O significantly improves both training stability and final performance compared to state-of-the-art baselines.

\section{Related Work}
\label{sec:related}
\subsection{Offline Reinforcement Learning}
Offline Reinforcement Learning (RL) addresses the challenge of optimizing decision-making policies exclusively from static, pre-collected datasets without active environment interaction~\cite{levine2020offline, fu2020d4rl,yangopride}. The primary pathology plaguing this regime is \textit{distributional shift}: when the learned policy deviates from the data-generating behavior distribution, bootstrapped value estimates suffer from severe extrapolation errors, leading to the catastrophic overestimation of out-of-distribution (OOD) actions~\cite{levine2020offline, le2019batch, yang2023flow}. To mitigate this vulnerability, existing literature predominantly relies on policy constraints or value regularization. Pioneer frameworks such as BCQ~\cite{fujimoto2019bcq}, BEAR~\cite{kumar2019bear}, BRAC~\cite{wu2019brac}, CRR~\cite{wang2020crr}, and AWR~\cite{peng2019awr} restrict the actor's action support or penalize statistical divergence from the behavior policy. While these techniques yield robust policy initializations under static regimes, they treat the optimization as a closed system, failing to safeguard the highly non-stationary and risk-prone transition to online fine-tuning.

\subsection{Offline Multi-Agent Reinforcement Learning}
Offline Multi-Agent Reinforcement Learning (MARL) complicates this paradigm by compounding standard distributional shifts with joint-action dependencies, exponentially scaling coordination spaces, and the inherent Centralized Training with Decentralized Execution (CTDE) mismatch~\cite{prudencio2023survey, cui2022offline, tseng2022offline, jiang2021offline,yang2026globediff}. Extant methodologies tackle these multi-agent nuances through coordinated policy or value-based regularization. On the policy side, ICQ~\cite{yang2021believe} enforces implicit quantile constraints on decentralized policies to suppress extrapolation errors, while OMAR~\cite{pan2022actorrect} integrates evolutionary optimization to rectify joint policy drift. On the value side, MACQL~\cite{shao2023counterfactual} extends conservative value iteration by penalizing joint state-action value functions. More recently, frameworks like OMIGA~\cite{wang2023globaltolocal} adapt structured value-decomposition architectures—such as QMIX~\cite{rashid2018qmix} and QPLEX~\cite{son2021qplex}—to offline settings, leveraging implicit global-to-local regularization to reconcile centralized coordination with decentralized execution. Crucially, while these methods establish coherent coordination under static data, they are structurally rigid and lack adaptive mechanisms to accommodate agent-level variability during online deployment.

\subsection{Offline-to-Online Adaptation}
The bridge between offline initialization and active online refinement has witnessed rapid development, though its deployment remains largely restricted to single-agent settings. Existing literature can be broadly dichotomized into two paradigms based on how they leverage pre-trained offline priors:

\paragraph{Conservative Fine-Tuning and Regularization} This paradigm treats the pre-trained offline policy as a warm-start initialization while imposing behavioral or value constraints during online interactions to ward off catastrophic value collapse. Representative techniques utilize advantage-weighted policy updates~\cite{nair2020awac, fujimoto2021td3bc}, implicit value regularization~\cite{kostrikov2022iql, nakamoto2023calql}, or adaptive epistemic uncertainty quantification~\cite{li2023proto}. To maximize sample efficiency, data-centric variants like RLPD~\cite{ball2023rlpd} pair aggressive replay buffer blending with structural normalization. Despite their algorithmic diversity, these methods share a unified assumption: they focus on stabilizing the continuous parameter evolution of a single, monolithic policy.

\paragraph{Structural Policy Expansion and Composition} Rather than forcibly updating a singular network, this alternative paradigm explicitly preserves the offline policy as an unalterable or distinct functional anchor. For instance, PEX~\cite{zhang2023pex} and BOORL~\cite{hu2024boorl} construct composite policy architectures that dynamically arbitrate between conservative offline behaviors and exploratory online policies. This explicit decoupling allows the agent to safely venture into novel state-action spaces without discarding the foundational competence acquired offline.

\section{Preliminaries}
\subsection{Dec-POMDP Formulation}
We formulate the cooperative multi-agent task within the mathematical framework of a Decentralized Partially Observable Markov Decision Process (Dec-POMDP), which is formally defined by the tuple $\mathcal{M} = \langle S, \mathcal{A}, P, R, N, \Omega, O, \gamma \rangle$. 
Here, $S$ denotes the global state space, $\mathcal{A} = \prod_{i=1}^N \mathcal{A}^i$ represents the joint action space, and $N$ signifies the total number of agents. 
At each environmental timestep $t$, each individual agent $i \in \{1, \dots, N\}$ receives a local observation $o^i \in \Omega$ generated by the observation function $O^i(s)$, and independently selects an individual action $a^i \in \mathcal{A}^i$. 
The aggregation of these individual choices constitutes the joint action vector $\bm{a} = \langle a^1, \dots, a^N \rangle \in \mathcal{A}$, which prompts the environment to transition to the next state $s' \sim P(\cdot \mid s, \bm{a})$ and yields a shared global reward $r = R(s, \bm{a})$. 
The collective objective of the team of agents is to parameterize a joint policy $\bm{\pi}$ that maximizes the expected discounted global return $J(\bm{\pi}) = \mathbb{E} \left[ \sum_{t=0}^\infty \gamma^t r_t \right]$, where $\gamma \in [0, 1)$ serves as the temporal discount factor.

\subsection{Offline-to-Online Reinforcement Learning}
\label{sec:prelim_offline-to-online}
Offline-to-online reinforcement learning investigates the algorithmic paradigm where an agent transitions seamlessly from non-interactive training on static datasets to active online policy improvement. 
Formally, given a static offline dataset $\mathcal{D}_{\text{off}}$, the initial offline phase constructs a behavioral policy $\bm{\pi}_{\beta}$ and a centralized action-value critic $Q_{\text{tot}}$ through statistical learning. 

To bridge the operational gap prior to active deployment, the online adaptation stage initializes the online policy parameters $\bm{\pi}_{\theta}$ using the learned weights of $\bm{\pi}_{\beta}$ and inherits the structure of $Q_{\text{tot}}$ to exploit the historical value estimates acquired from the static data. 
During the subsequent online fine-tuning phase, the joint policy interacts dynamically with the environment to collect fresh transitions, which are stored in an online experience replay buffer $\mathcal{D}_{\text{on}}$. 
The core objective is to continuously refine the online policy $\bm{\pi}_{\theta}$ and update the centralized critic $Q_{\text{tot}}$ by simultaneously utilizing the structural priors from $\mathcal{D}_{\text{off}}$ and the exploratory trajectories from $\mathcal{D}_{\text{on}}$. 
The fundamental challenge underlying this algorithmic transition resides in mitigating the severe distribution shift between the static offline data distribution and the evolving online visitation distribution.

\section{Methodology}
\label{sec:method}

Existing single-agent offline-to-online methods typically formulate the transition from offline priors to online exploration as a unified decision-making process. 
However, directly extending this paradigm to multi-agent systems introduces a severe coordination bottleneck, as forcing all agents to strictly and simultaneously adhere to either offline or online modes severely restricts the behavioral flexibility required to discover superior joint policies.
To resolve this dilemma, we introduce Sim2O, a streamlined compositional framework for offline-to-online MARL that decouples individual action proposals while maintaining global coordination.

The remainder of this section is organized as follows. First, Section~\ref{sec:hybrid_space} formally defines the online adaptation optimization problem within a fine-grained, hybrid joint action space. Subsequently, Section~\ref{sec:cbs} introduces Coordinated Beam Search, a novel algorithmic component designed to solve this problem by incrementally constructing high-quality joint actions through agent-wise expansion and centralized evaluation. Finally, Section~\ref{sec:training} details the complete training protocol and practical implementation considerations. The overall architecture of Sim2O is schematically illustrated in Figure~\ref{fig:framework} and formalized in Algorithm~\ref{alg:sim2o}.


\subsection{Fine-Grained Adaptation Formulation}
\label{sec:hybrid_space}

To formalize the offline-to-online adaptation process, we first distinguish between two distinct operational scales in multi-agent environments: \textit{joint-level} and \textit{agent-level} decision-making paradigms. 
Under a conventional joint-level paradigm, the executed joint action must be drawn entirely and monolithically from either the offline policy or the active online policy. 
This coarse-grained coupling enforces an all-or-nothing constraint on action selection, severely restricting behavioral flexibility during online fine-tuning. 
We term this structural limitation \textit{granularity mismatch}. 

In contrast, an agent-level paradigm allows candidate joint actions to be flexibly constructed by selectively pooling offline and online action proposals across different agents. 
Formally, given the current state $s$, for each agent $i \in \{1, \dots, N\}$, we define a local action candidate set $\mathcal{A}^i = \{a^i_\beta, a^i_\theta\}$, where $a^i_\beta \sim \pi^i_\beta(\cdot|o^i)$ and $a^i_\theta \sim \pi^i_\theta(\cdot|o^i)$ represent the actions proposed by the frozen offline prior and the learnable online policy, respectively. 
A fine-grained \textit{hybrid joint action} $\bm{a}$ is then formed by selecting exactly one alternative action for each agent, yielding a combinatorial hybrid action space:
\begin{equation}
    \mathcal{A}_{\text{hyb}} = \prod_{i=1}^{N} \mathcal{A}^i.
\end{equation}

Leveraging this fine-grained joint action space, we formulate the optimal online adaptation policy as identifying the specific hybrid joint action that maximizes the global value estimate. Consequently, the optimal joint action $\bm{a}^*$ is derived by solving:
\begin{equation}
    \bm{a}^* = \mathop{\arg\max}_{\bm{a} \in \mathcal{A}_{\text{hyb}}} Q_{\text{tot}}(s, \bm{a}).
    \label{eq:objective}
\end{equation}

Eq.~\ref{eq:objective} formalizes the foundational principle of Sim2O: \emph{local expansion with centralized evaluation}. Crucially, while individual action candidates are generated independently at the agent level, the composition of the final hybrid joint action is systematically guided by the centralized value function to guarantee coherent team coordination.

This formulation offers a distinct advantage over synchronized adaptation: it dynamically capitalizes on localized behavioral improvements in the online policy while preserving the structural safeguards of offline priors. 
However, directly optimizing Eq.~\ref{eq:objective} via exhaustive search is computationally intractable, as the cardinality of $\mathcal{A}_{\text{hyb}}$ scales exponentially with the number of agents as $\mathcal{O}(2^N)$. To bypass this computational bottleneck, we introduce a structured, linear-time inference procedure in the following subsection.

\subsection{Coordinated Beam Search}
\label{sec:cbs}

To efficiently solve Eq.~\ref{eq:objective} without explicit enumeration, we introduce Coordinated Beam Search (CBS), an algorithmic component with linear complexity that incrementally constructs high-quality joint actions through iterative agent-wise expansion and centralized evaluation. 
The operational mechanics of CBS proceed through the sequential phases detailed below.

\textbf{Initialization.} 
The backbone of the CBS algorithm is the beam $\mathcal{B}$, a tracking structure maintained to store the $k$ (beam width) most promising partial or full joint action candidates identified during the search. 
The search is anchored by initializing the beam with the baseline joint action generated exclusively by the online policy:
\begin{equation}
\label{eq:beam_init}
    \mathcal{B}_0 = \{\bm{a}_{\theta}\}, \quad \text{where } \bm{a}_{\theta} = \langle a^1_\theta, \dots, a^N_\theta \rangle \text{ with } a^i_\theta \sim \pi^i_\theta(\cdot|o^i).
\end{equation}
This default initialization establishes a high-quality, coordinated baseline for subsequent agent-wise iterative refinement.

\begin{figure}[t] 
\begin{minipage}{1.0\linewidth}
\vspace*{1pt}
\begin{algorithm}[H]
\caption{Sim2O: Efficient Offline-to-Online MARL via Joint Action Composition}
\label{alg:sim2o}

\begin{algorithmic}
\REQUIRE Offline dataset $\mathcal{D}_{\text{off}}$, offline policy $\pi_\beta$
\STATE Initialize online policy $\pi_\theta$ and value functions
\FOR{$t=0$ {\bfseries to} $T-1$}
    \STATE Initialize $\mathcal{B}_0$  based on Eq.~\ref{eq:beam_init}
    \FOR{$j \in \textsc{RandomPermutation}(\{1,\dots,N\})$}
        \STATE $\mathcal{C}_j \leftarrow \mathcal{B}_{j-1}$ based on Eq.~\ref{eq:expansion}
        \STATE $\mathcal{B}_j \leftarrow \mathcal{C}_j $ based on Eq.~\ref{eq:pruning2}
    \ENDFOR
    \STATE $\bm{a}_{t} \sim \mathcal{B}_N$ based on Eq.~\ref{eq:pruning}
    \STATE Execute $\bm{a}_{t}$ in the Env and add transitions into $  \mathcal{D}_{\text{on}}$
    \STATE Combine $\mathcal{D}_{\text{off}}$ and $  \mathcal{D}_{\text{on}}$ to obtain $\mathcal{D}$
    \STATE $\phi \leftarrow \phi - \eta \nabla_{\phi} \mathcal{L}_{V}(\phi)$ based on Eq.~\ref{eq:loss_v}
    \STATE $\psi \leftarrow \psi - \eta \nabla_{\psi} \mathcal{L}_{Q}(\psi)$ based on Eq.~\ref{eq:loss_q}
    \STATE $\theta \leftarrow \theta - \eta \nabla_{\theta} \mathcal{L}_{\pi}(\theta)$ based on Eq.~\ref{eq:loss_pi}
\ENDFOR
\end{algorithmic}
\end{algorithm}
\end{minipage}
\end{figure}

\textbf{Agent-wise Expansion and Evaluation.} 
To systematically explore the hybrid combinatorics, we iteratively expand the search space by branching out from the profiles preserved in the current beam. 
Specifically, during the $j$-th iteration, we target a specific agent $i \in \{1, \dots, N\}$ (e.g., following a sequential order or a random permutation). For every candidate joint action $\bm{a} \in \mathcal{B}_{j-1}$ currently held in the beam, we construct a hybrid variant by replacing its $i$-th component with the offline action prior $a^i_\beta \sim \pi^i_\beta(\cdot|o^i)$. 
The expanded candidate set $\mathcal{C}_j$ is then formally defined as the union of the historical beam and these newly generated counterfactual variants:
\begin{equation}
\label{eq:expansion}
    \mathcal{C}_j = \mathcal{B}_{j-1} \cup \left\{ \langle a^i_\beta, \bm{a}^{-i} \rangle \mid \forall \bm{a} \in \mathcal{B}_{j-1} \right\},
\end{equation}
where $\bm{a}^{-i}$ denotes the sub-vector of actions for all agents excluding agent $i$. 
Consequently, the cardinality of the candidate set precisely doubles at each expansion step, i.e., $|\mathcal{C}_j| = 2|\mathcal{B}_{j-1}|$. By seamlessly cross-breeding the behavioral footprints of the online beam with the offline priors, $\mathcal{C}_j$ encompasses a highly diverse spectrum of hybrid joint action profiles.

\textbf{Probabilistic Pruning and Selection.}
Absent a capacity-bounding mechanism, the cardinality of the candidate set $\mathcal{C}_j$ would compound exponentially at a rate of $\mathcal{O}(2^j)$, quickly breaching the computational tractability established by the beam width $k$. 
To maintain algorithmic efficiency while preserving high-reward exploratory paths, we implement a probabilistic pruning mechanism to dynamically filter the search space. 
We first evaluate all candidate profiles in $\mathcal{C}_j$ using the centralized action-value function $Q_{\text{tot}}$, and establish a soft-max distribution over the candidate set:
\begin{equation}
    \label{eq:pruning}
    P(\bm{a} \mid s) = \frac{\exp\left(Q_{\text{tot}}(s, \bm{a})/\tau\right)}{\sum_{\bm{a}' \in \mathcal{C}_j} \exp\left(Q_{\text{tot}}(s, \bm{a}')/\tau\right)},
\end{equation}
where $\tau > 0$ represents a temperature parameter regulating the selection intensity. The updated beam $\mathcal{B}_j$ is subsequently constructed by sampling $k$ unique joint actions from $\mathcal{C}_j$ without replacement, governed by the evaluated probabilities:
\begin{equation}
\label{eq:pruning2}
    \mathcal{B}_j = \operatorname{Sample-}k_{\bm{a} \in \mathcal{C}_j} \{ P(\bm{a}\mid s) \}.
\end{equation}
This probabilistic filtration ensures that joint actions demonstrating superior global coordination and value estimates are preferentially retained, while unpromising branches are gracefully discarded.

This dual-phase cycle of agent-wise expansion and probabilistic pruning is executed sequentially for $N$ iterations to yield the final converged beam $\mathcal{B}_N$. Ultimately, the definitive joint action to be deployed in the environment is sampled from $\mathcal{B}_N$ according to the distribution defined in Eq.~\ref{eq:pruning}.

    

\begin{figure*}[t]
    \centering
    \vspace{2mm}

    \subfloat[HalfCheetah 6x1]{%
        \includegraphics[width=0.18\textwidth]{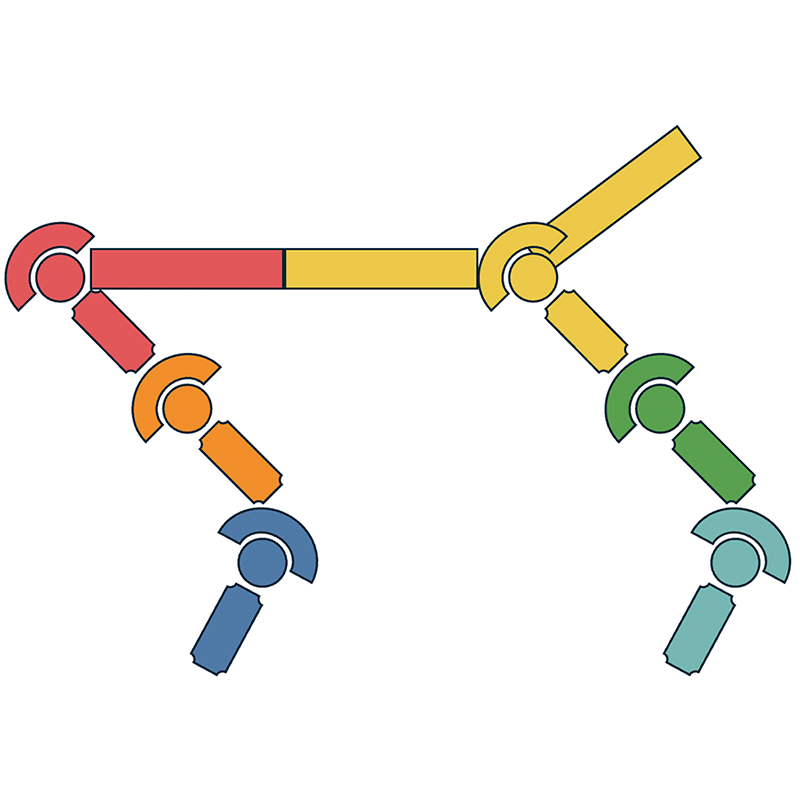}
        \label{fig:cheetah}
    }
    \hfill
    \subfloat[Ant 8x1]{%
        \includegraphics[width=0.18\textwidth]{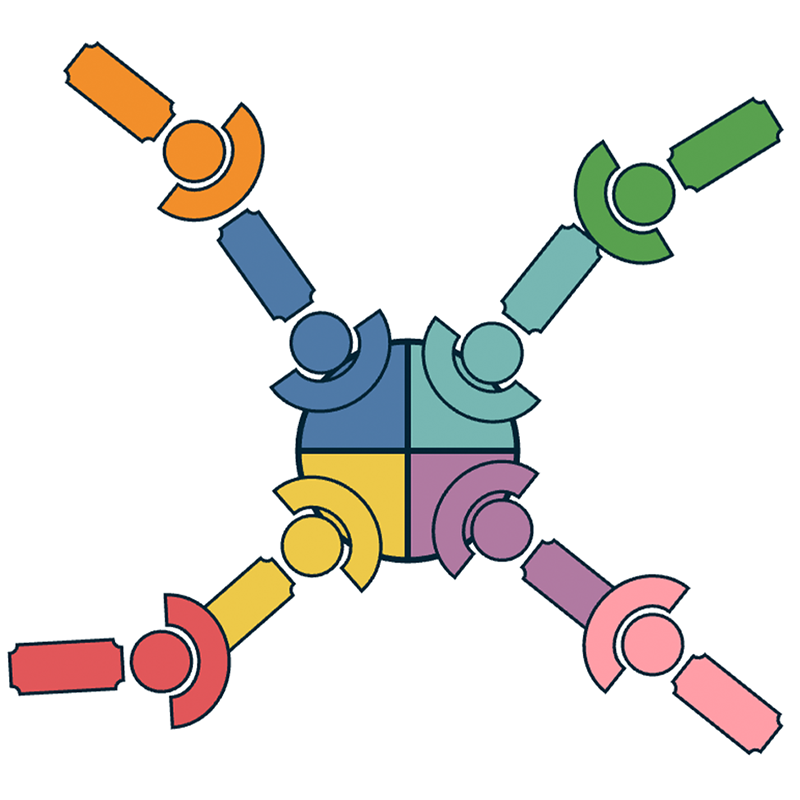}
        \label{fig:ant8x1}
    }
    \hfill
    \subfloat[Ant 2x4]{%
        \includegraphics[width=0.18\textwidth]{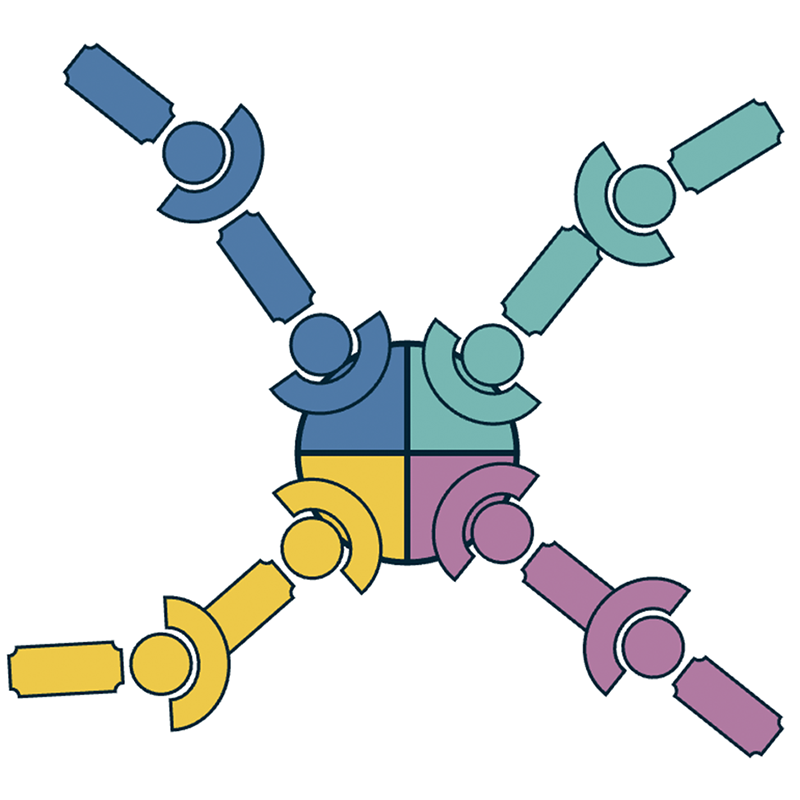}
        \label{fig:ant2x4}
    }
    \hfill
    \subfloat[Hopper 3x1]{%
        \includegraphics[width=0.18\textwidth]{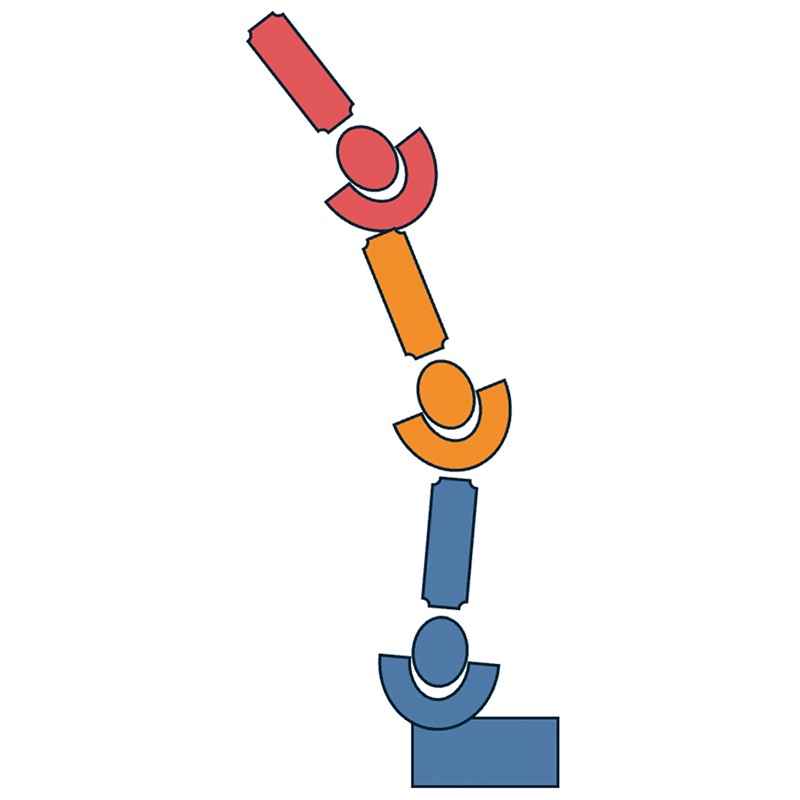}
        \label{fig:hopper}
    }

    \caption{Agent partitionings for MA-MuJoCo environments. Different colors represent different agents controlling specific joints. (a) HalfCheetah with 6 agents. (b) Ant with 8 agents. (c) Ant with 4 agents. (d) Hopper with 3 agents.}
    \label{fig:mamujoco_viz}
\end{figure*}

\subsection{Practical Implementation}
\label{sec:training}
The Sim2O framework operates as a general, modular paradigm that remains agnostic to the underlying MARL algorithm. 
In this work, we instantiate our framework using OMIGA~\cite{wang2023globaltolocal} as the primary algorithmic backbone. Following the standard offline-to-online adaptation protocol, we pre-train the decentralized policies and the centralized critic on the static offline dataset $\mathcal{D}_{\text{off}}$. 
Subsequently, the parameters of the active online policy $\bm{\pi}_\theta$ and the centralized critic $Q_{\text{tot}}$ are initialized with these pre-trained weights, while the offline prior policy $\bm{\pi}_\beta$ remains frozen throughout the online phase to act as a stable behavioral anchor.

During the online fine-tuning phase, at each environmental time step $t$, Sim2O generates an executable joint action $\bm{a}_t$ via the Coordinated Beam Search procedure detailed in Section~\ref{sec:cbs}. 
The resulting environmental transitions are collected and stored in an online replay buffer $\mathcal{D}_{\text{on}}$. 
To mitigate catastrophic forgetting and stabilize the online optimization trajectory, we employ a balanced experience replay mechanism, wherein training minibatches $\mathcal{D}$ are sampled from a joint mixture of $\mathcal{D}_{\text{off}}$ and $\mathcal{D}_{\text{on}}$ governed by a fixed mixing ratio $\rho$.

Model parameters are optimized by applying the core OMIGA objectives~\cite{pan2022actorrect} over the sampled minibatches $\mathcal{D}$. Specifically, the local value functions $V^i(o^i)$, parameterized by $\phi$, are updated by minimizing the following objective:
\begin{equation}
\label{eq:loss_v}
\begin{aligned}
\mathcal{L}_{V}(\phi)
= \mathbb{E}_{\mathcal{D}} \bigg[
    \sum_{i=1}^N
    \bigg(
        \exp &\left(
            \frac{w^i(s)}{\alpha}
            \big( Q^i(o^i, a^i) - V^i(o^i) \big)
        \right) \\
        &+ \frac{w^i(s) V^i(o^i)}{\alpha}
    \bigg)
\bigg],
\end{aligned}
\end{equation}
where $w^i(s)$ denotes the state-dependent agent coordination weights, and $\alpha$ represents the temperature regularization coefficient. 
Concurrently, the centralized critic $Q_{\text{tot}}$, parameterized by $\psi$, is optimized via the temporal-difference loss:
\begin{equation}
\label{eq:loss_q}
\mathcal{L}_{Q}(\psi)
= \mathbb{E}_{\mathcal{D}} \left[
    \left( r + \gamma V_{\text{tot}}(s') - Q_{\text{tot}}(s, \bm{a}) \right)^2
\right],
\end{equation}
where the global value and action-value functions are factorized as $V_{\text{tot}}(s) = \sum_{i=1}^N w^i(s) V^i(o^i) + b(s)$ and $Q_{\text{tot}}(s, \bm{a}) = \sum_{i=1}^N w^i(s) Q^i(o^i, a^i) + b(s)$, respectively, with $b(\cdot)$ representing a state-dependent offset network. 
ly, the parameters $\theta$ of the online policy are updated by maximizing the policy advantage profile:
\begin{equation}
\label{eq:loss_pi}
\begin{aligned}
\mathcal{L}_{\pi}(\theta)
= - \mathbb{E}_{\mathcal{D}} \bigg[
    \sum_{i=1}^N
    &\exp \left(
        \frac{w^i(s)}{\alpha}
        \big( Q^i(o^i, a^i) - V^i(o^i) \big)
    \right) \\
    &\cdot \log \pi_{\theta}^i(a^i \mid o^i)
\bigg].
\end{aligned}
\end{equation}
The complete pipeline of the offline-to-online adaptation procedure is structured and summarized in Algorithm~\ref{alg:sim2o}.

\subsection{Analysis}
\label{sec:analysis}

\paragraph{Computational Complexity.}
As formulated in Section~\ref{sec:hybrid_space}, determining the optimal adaptation policy mandates maximizing Eq.~\ref{eq:objective} over the combinatorial hybrid joint action space $\mathcal{A}_{\text{hyb}}$. Since the cardinality of this space scales exponentially as $|\mathcal{A}_{\text{hyb}}| = 2^N$, an exhaustive enumeration incurs a prohibitive computational overhead of $\mathcal{O}(2^N)$, rendering direct optimization intractable for environments with a large scaling count of agents. Coordinated Beam Search (CBS) systematically alleviates this tractability bottleneck by decomposing the global joint optimization into a sequential pathway of $N$ localized, agent-wise expansions. In each sequential iteration $j$, the candidate set $\mathcal{C}_j$ undergoes probabilistic pruning, ensuring that the cardinality of the active beam $\mathcal{B}_{j}$ is strictly bounded by the configured beam width $k$ (i.e., $|\mathcal{B}_j| \le k$). Consequently, the overall computational complexity per decision step is bounded by $\mathcal{O}(N k)$ forward passes of the centralized critic network $Q_{\text{tot}}$. This reduction achieves a critical scaling paradigm shift: transitioning the computational footprint from an intractable exponential cost to a linear complexity with respect to the number of agents $N$.

\paragraph{Impact of Beam Width.}
The beam width $k$ serves as a pivotal hyperparameter that mediates the intrinsic trade-off between search coverage and computational efficiency. In the strict edge case where $k = 1$, CBS degenerates into a purely greedy, single-candidate local search. While computationally minimal, this regime lacks backtracking capabilities and is highly susceptible to becoming trapped in sub-optimal local extrema due to the highly non-convex nature of the cooperative joint $Q$-value landscape. Conversely, as $k$ scales upward, Sim2O preserves a more diverse ensemble of intermediate hybrid joint action candidates. This behavioral diversity effectively expands the search horizon, thereby empowering CBS to robustly navigate the combinatorial intersections of offline priors and active online exploration, ultimately yielding approximations that lie significantly closer to the true global optimum.

\section{Theoretical Analysis}
\label{sec:theory}
In this section, we clear a rigorous theoretical pathway to analyze the approximation gap between coordinated beam search and the global optimal solution within the combinatorial hybrid joint action space. 
We first define a local individual candidate action set for each agent $i$ at the current decision step as $\mathcal{A}^i = \{a^i_\beta, a^i_\theta\}$, which explicitly comprises the discrete individual actions generated by the frozen offline behavior policy $a^i_\beta \sim \pi^i_\beta(\cdot \mid o^i)$ and the active online policy $a^i_\theta \sim \pi^i_\theta(\cdot \mid o^i)$, respectively. 
Based on these localized components, we formalize two distinct archetypes of joint action spaces that govern the online adaptation process.

\begin{definition}[Synchronized Joint Action Space]
\label{def:sync_space}
The synchronized joint action space constrained within traditional online adaptation paradigms is defined as the binary set of fully aligned joint action vectors:
\begin{equation}
\mathcal{A}_{\text{sync}}
=
\big\{ \bm{a}_\beta, \bm{a}_\theta \big\},
\end{equation}
where $\bm{a}_\beta = \langle a^1_\beta, \dots, a^N_\beta \rangle$, $\bm{a}_\theta = \langle a^1_\theta, \dots, a^N_\theta \rangle$, and the cardinality of this restricted space is strictly bounded as $|\mathcal{A}_{\text{sync}}|=2$.
\end{definition}

\begin{definition}[Fine-Grained Hybrid Action Space]
\label{def:fine_space}
The fine-grained hybrid joint action space is structured as the complete Cartesian product of the localized individual alternative sets:
\begin{equation}
    \mathcal{A}_{\text{hyb}} = \prod_{i=1}^{N} \mathcal{A}^i,
\end{equation}
where the exponential combination yields a cardinality of $|\mathcal{A}_{\text{hyb}}|=2^N$.
\end{definition}

\begin{theorem}[Search Space Dominance]
\label{thm:dominance}
For the true global joint action-value function $Q_{\text{tot}}(s,\bm{a})$, the optimization upper bound satisfies:
\begin{equation}
\max_{\bm{a}\in\mathcal{A}_{\text{hyb}}} Q_{\text{tot}}(s,\bm{a})
\;\ge\;
\max_{\bm{a}\in\mathcal{A}_{\text{sync}}} Q_{\text{tot}}(s,\bm{a}).
\end{equation}
\end{theorem}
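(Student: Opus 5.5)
The plan is to reduce Theorem~\ref{thm:dominance} to monotonicity of the maximum under set inclusion. Everything follows once we show $\mathcal{A}_{\text{sync}} \subseteq \mathcal{A}_{\text{hyb}}$. By Definition~\ref{def:fine_space}, a vector $\langle a^1,\dots,a^N\rangle$ lies in $\mathcal{A}_{\text{hyb}}$ exactly when $a^i \in \mathcal{A}^i = \{a^i_\beta, a^i_\theta\}$ for every agent $i$.

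\textbf{Step 1: inclusion.} Take the two elements of $\mathcal{A}_{\text{sync}}$ from Definition~\ref{def:sync_space}.
\begin{itemize}
\item For $\bm{a}_\beta = \langle a^1_\beta,\dots,a^N_\beta\rangle$, each component satisfies $a^i_\beta \in \mathcal{A}^i$, so $\bm{a}_\beta \in \mathcal{A}_{\text{hyb}}$.
\item For $\bm{a}_\theta$, the same argument gives $\bm{a}_\theta \in \mathcal{A}_{\text{hyb}}$.
\end{itemize}
Hence $\mathcal{A}_{\text{sync}} \subseteq \mathcal{A}_{\text{hyb}}$. Concretely, these are the two constant selections of the Cartesian product.

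\textbf{Step 2: maxima are well defined.} Both sets are finite and nonempty, with $|\mathcal{A}_{\text{sync}}| \le 2$ and $|\mathcal{A}_{\text{hyb}}| \le 2^N$. Since $Q_{\text{tot}}(s,\cdot)$ is real-valued, both maxima exist and are attained. No continuity or structural assumption on $Q_{\text{tot}}$ is needed.

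\textbf{Step 3: conclude.} Let $\bm{a}^\dagger \in \mathcal{A}_{\text{sync}}$ attain the right-hand maximum. By Step~1, $\bm{a}^\dagger \in \mathcal{A}_{\text{hyb}}$. Therefore
\[
\max_{\bm{a}\in\mathcal{A}_{\text{hyb}}} Q_{\text{tot}}(s,\bm{a}) \ge Q_{\text{tot}}(s,\bm{a}^\dagger) = \max_{\bm{a}\in\mathcal{A}_{\text{sync}}} Q_{\text{tot}}(s,\bm{a}).
\]

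The only delicate point is a degenerate case: sampled actions may coincide, e.g.\ $a^i_\beta = a^i_\theta$ for some or all $i$. The cardinality claims $|\mathcal{A}_{\text{sync}}| = 2$ and $|\mathcal{A}_{\text{hyb}}| = 2^N$ can then fail. The inclusion argument never uses cardinality, so the inequality still holds in that case.
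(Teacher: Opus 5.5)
Your proposal is correct and matches the paper's proof: both show $\mathcal{A}_{\text{sync}} \subseteq \mathcal{A}_{\text{hyb}}$ by observing that $\bm{a}_\beta$ and $\bm{a}_\theta$ are the two constant selections from the Cartesian product, then use monotonicity of the maximum over nested sets. Your extra remarks are sound refinements the paper omits: that the maxima are attained on finite sets, and that the degenerate case $a^i_\beta = a^i_\theta$ breaks the stated cardinalities but not the inequality.
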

\begin{proof}
    Please refer to Appendix~\ref{app:proof_dominance} for the detailed mathematical proof.
\end{proof}

Theorem~\ref{thm:dominance} theoretically validates that traditional online adaptation confined within the Synchronized Joint Action Space is inherently subject to strict performance ceilings. 
Conversely, the Fine-Grained Hybrid Action Space, enabled by agent-level combinatorial expansion, introduces an augmented spectrum of hybrid joint action profiles that possess potentially superior global value estimates.

\begin{figure*}[t]
    \centering
    \includegraphics[width=0.75\textwidth]{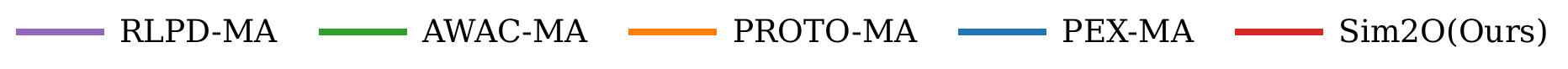}

    \subfloat[Ant 2x4 Medium]{%
        \includegraphics[width=0.24\textwidth]{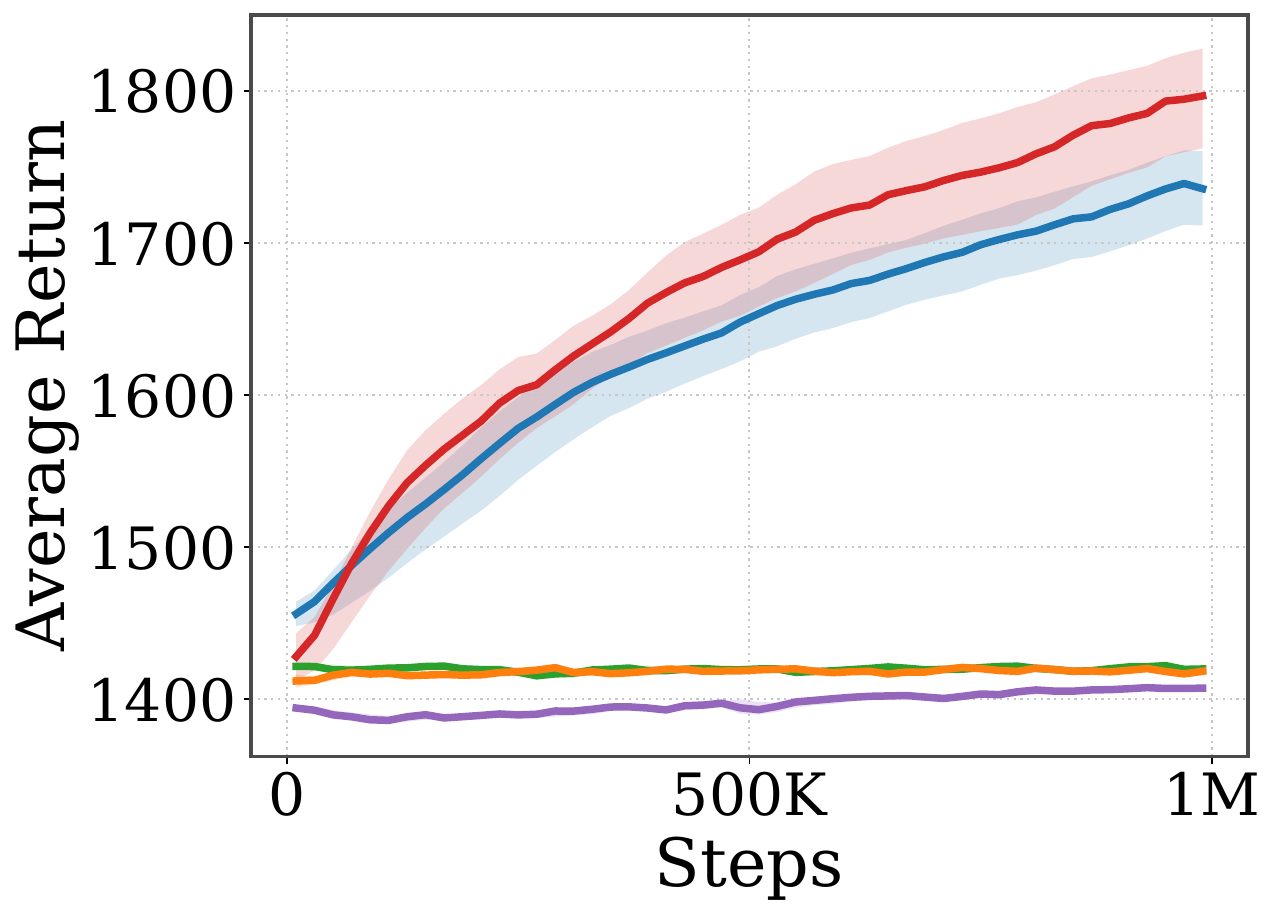}
    }
    \hfill
    \subfloat[Ant 2x4 Med-Rep]{%
        \includegraphics[width=0.24\textwidth]{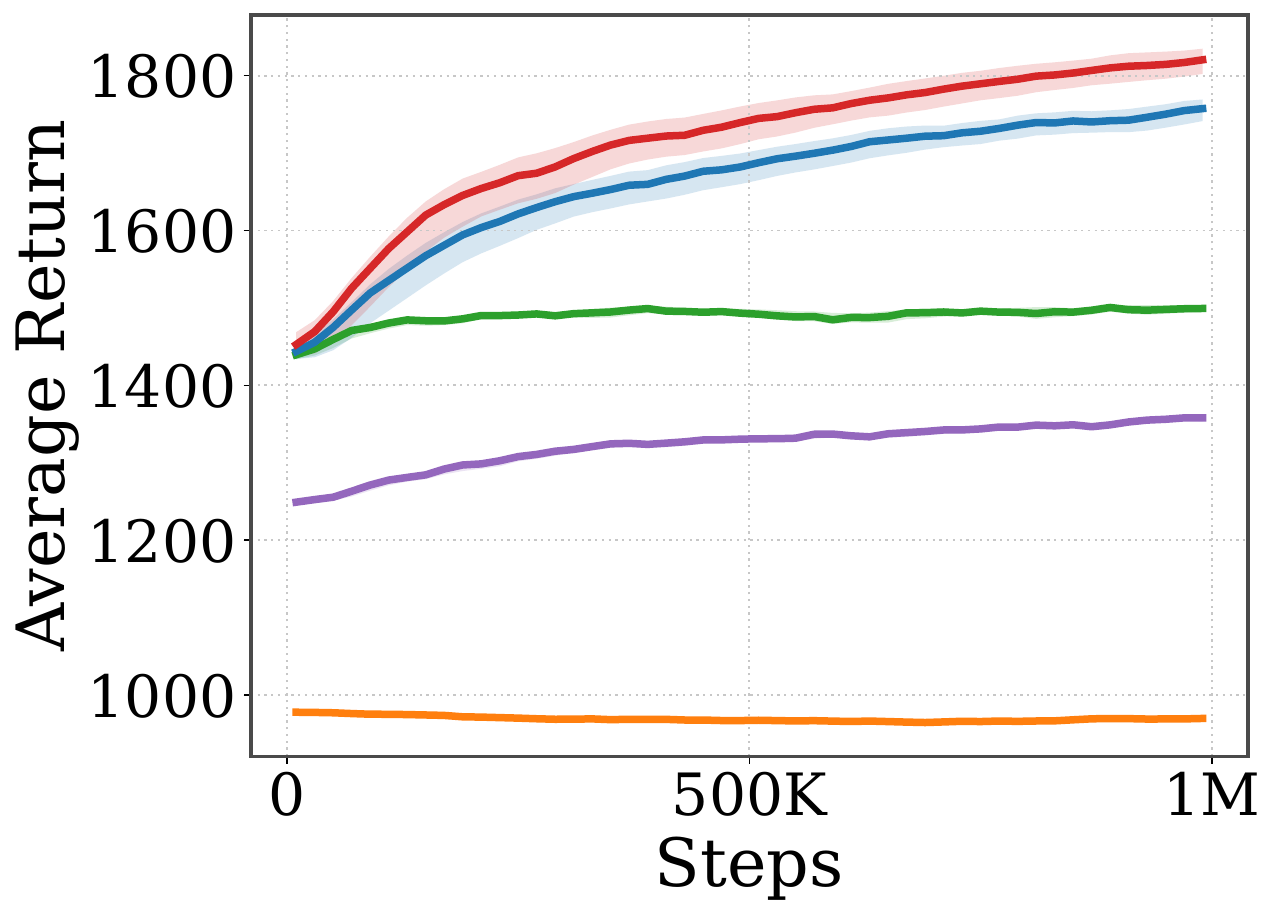}
    }
    \hfill
    \subfloat[Ant 4x2 Medium]{%
        \includegraphics[width=0.24\textwidth]{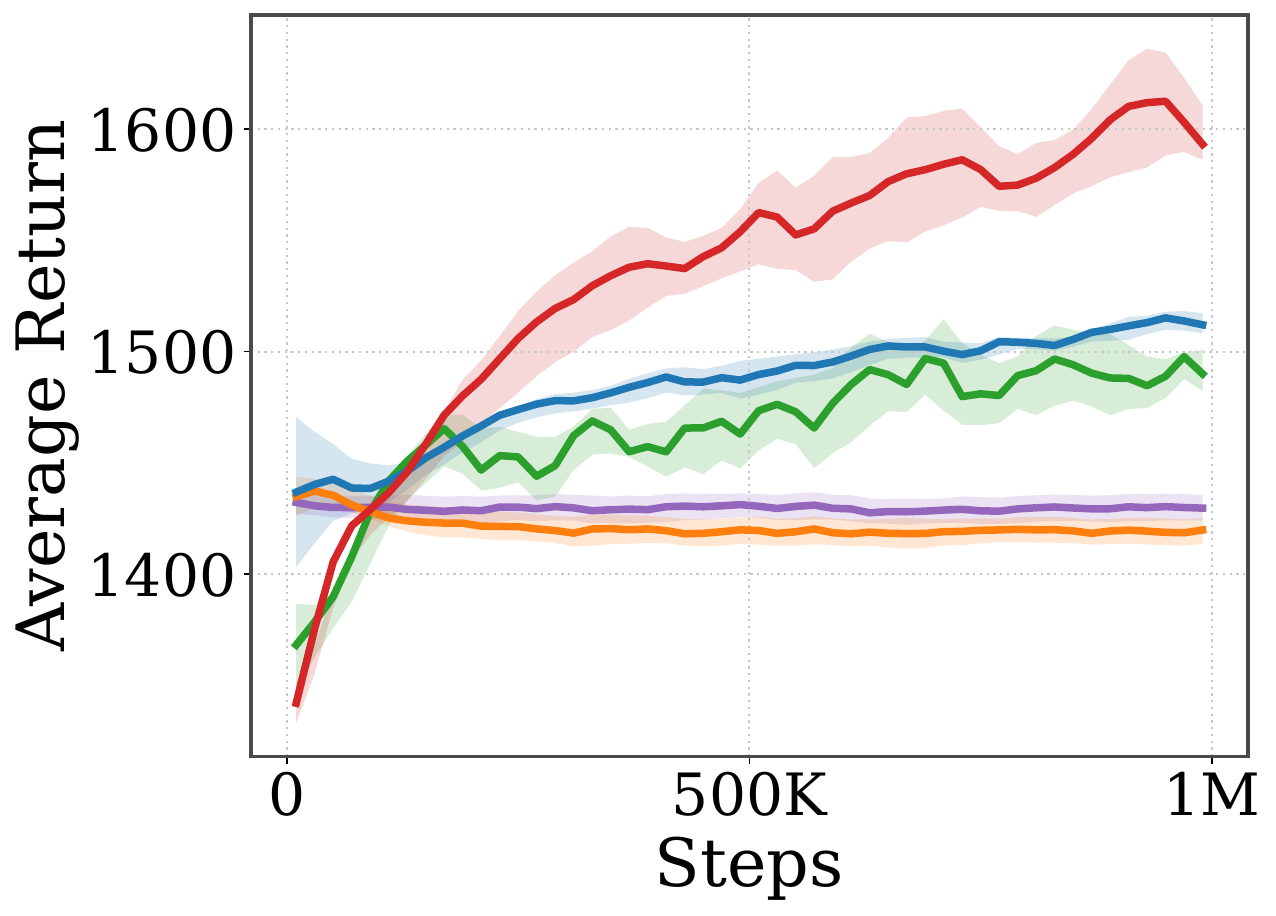}
    }
    \hfill
    \subfloat[Ant 4x2 Med-Exp]{%
        \includegraphics[width=0.24\textwidth]{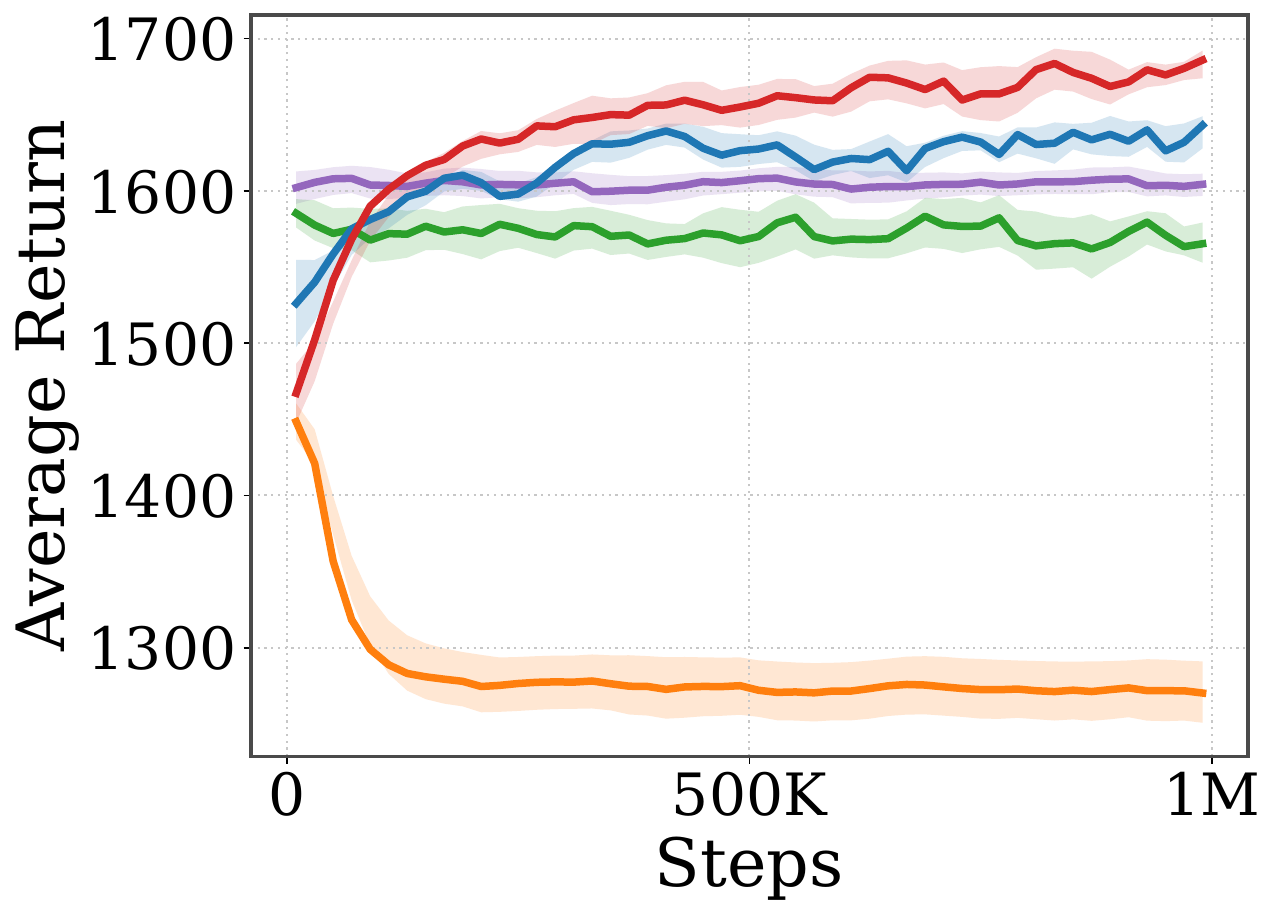}
    }

    \subfloat[Ant 4x2 Med-Rep]{%
        \includegraphics[width=0.24\textwidth]{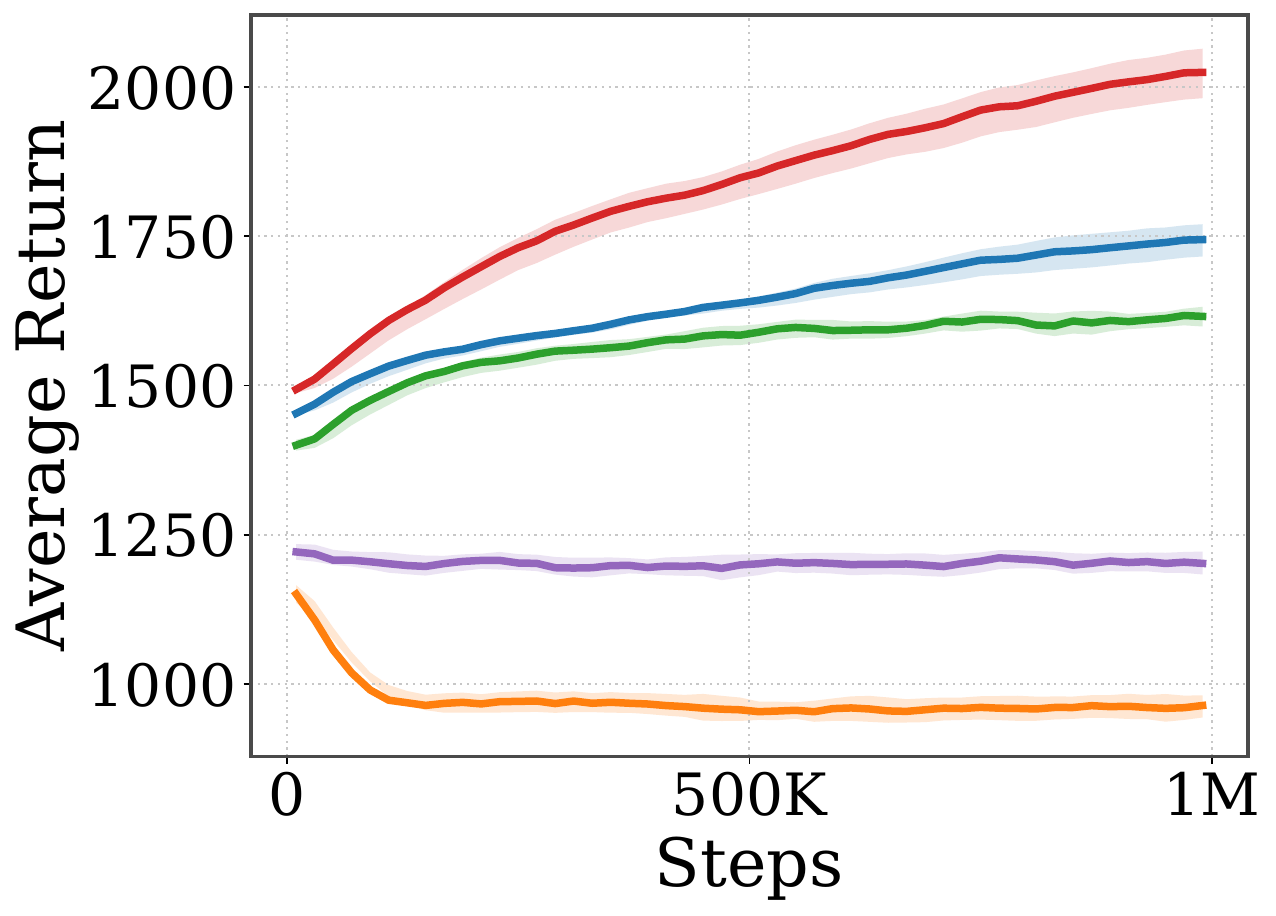}
    }
    \hfill
    \subfloat[Ant 8x1 Med-Rep]{%
        \includegraphics[width=0.24\textwidth]{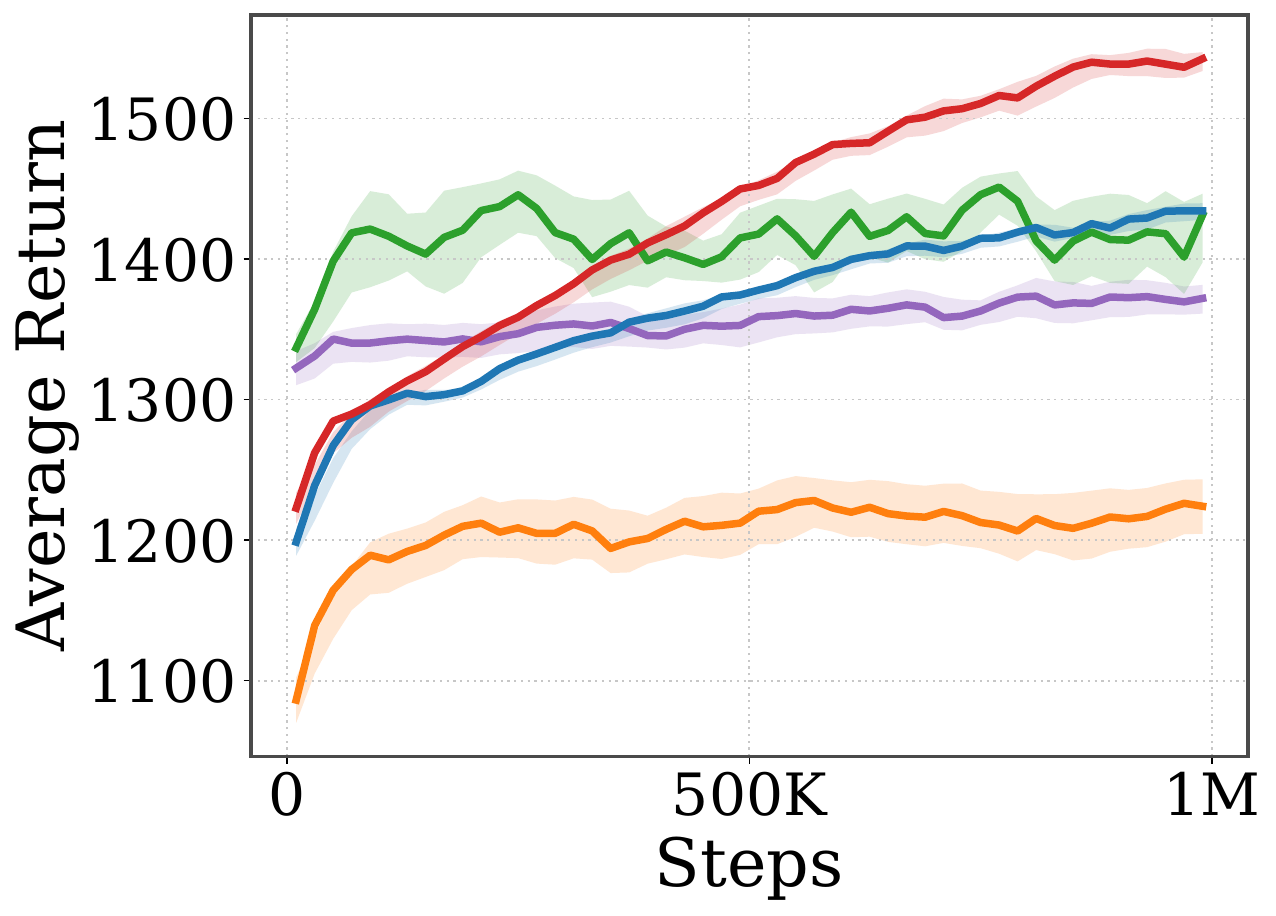}
    }
    \hfill
    \subfloat[HC 6x1 Medium]{%
        \includegraphics[width=0.24\textwidth]{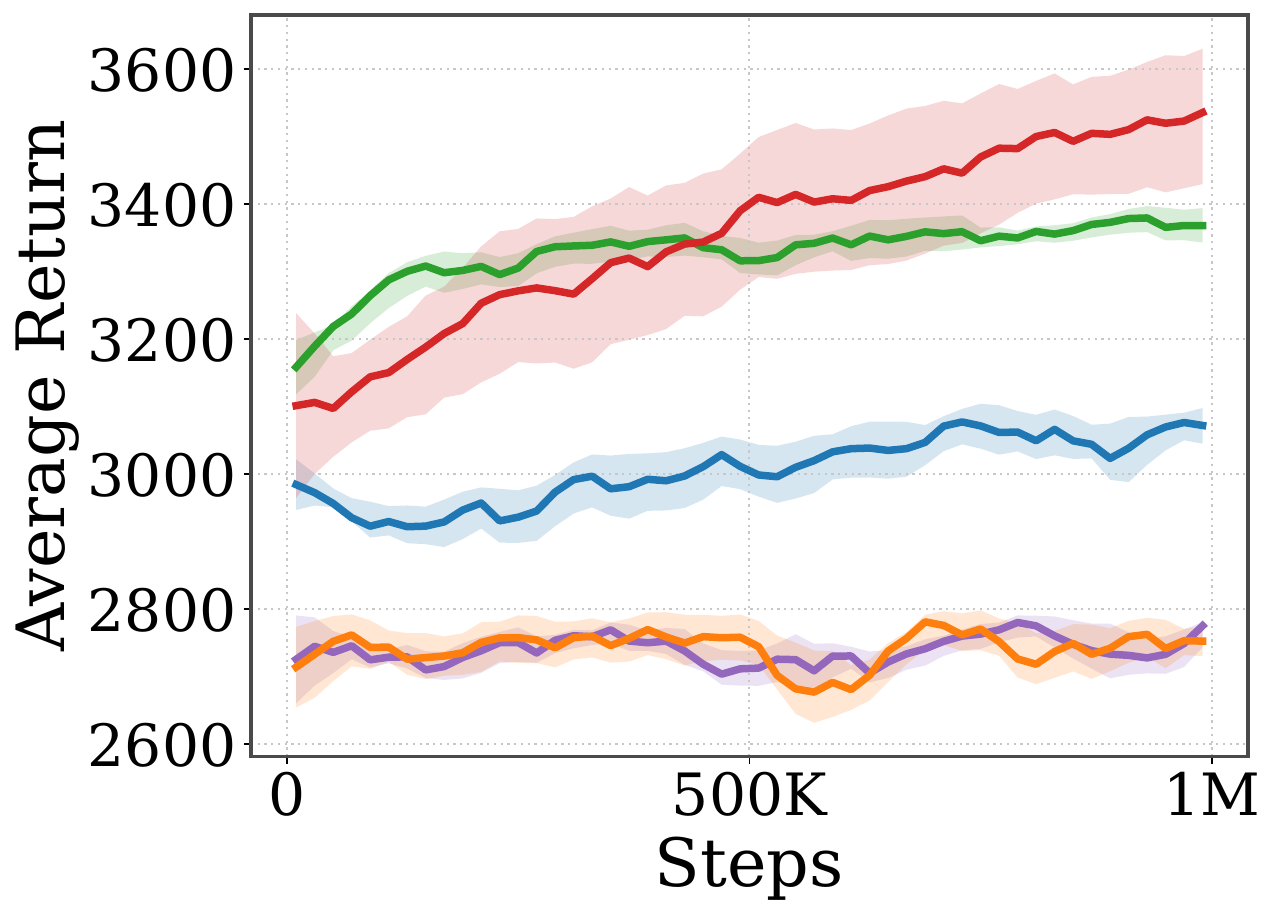}
    }
    \hfill
    \subfloat[HC 6x1 Med-Exp]{%
        \includegraphics[width=0.24\textwidth]{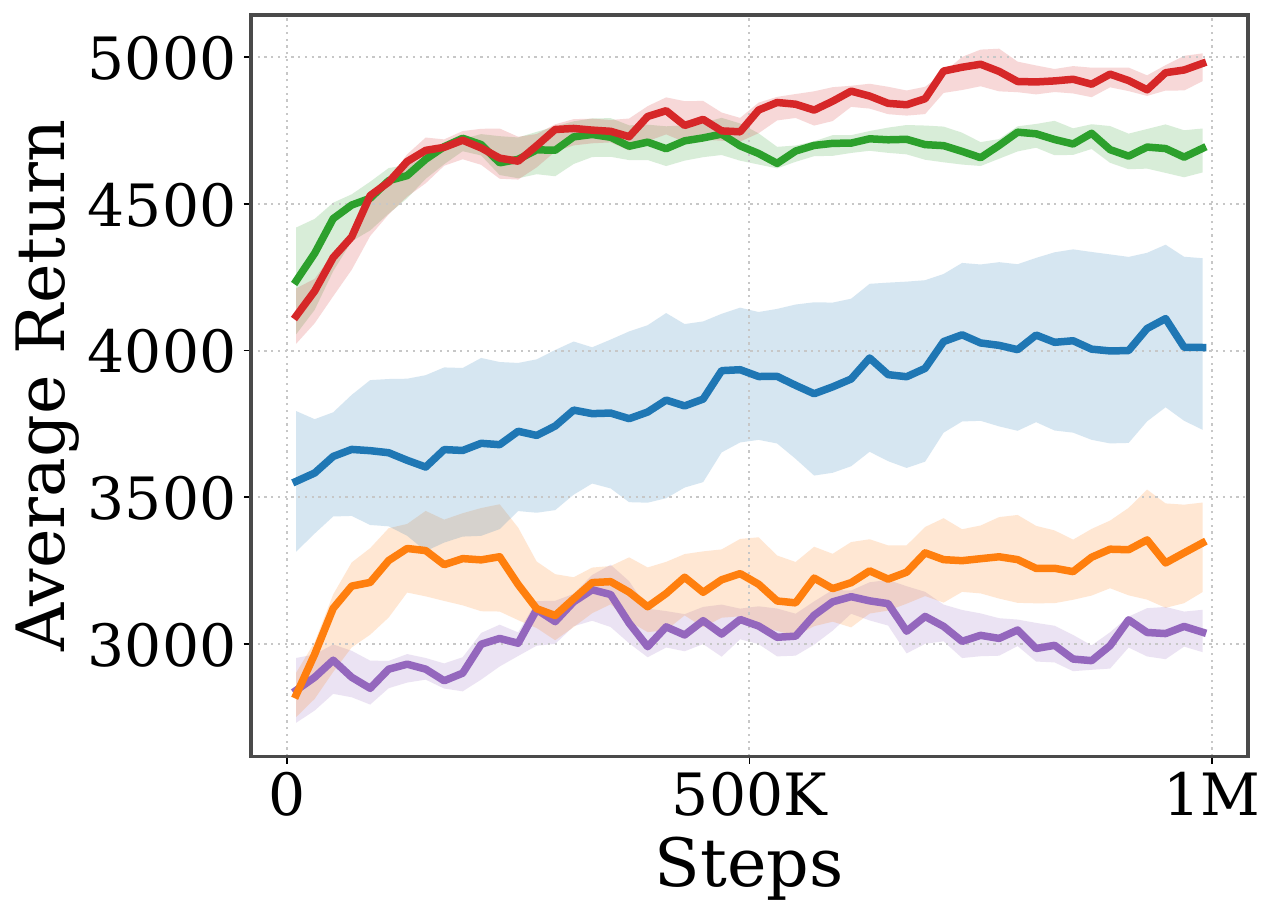}
    }

    \subfloat[HC 6x1 Med-Rep]{%
        \includegraphics[width=0.24\textwidth]{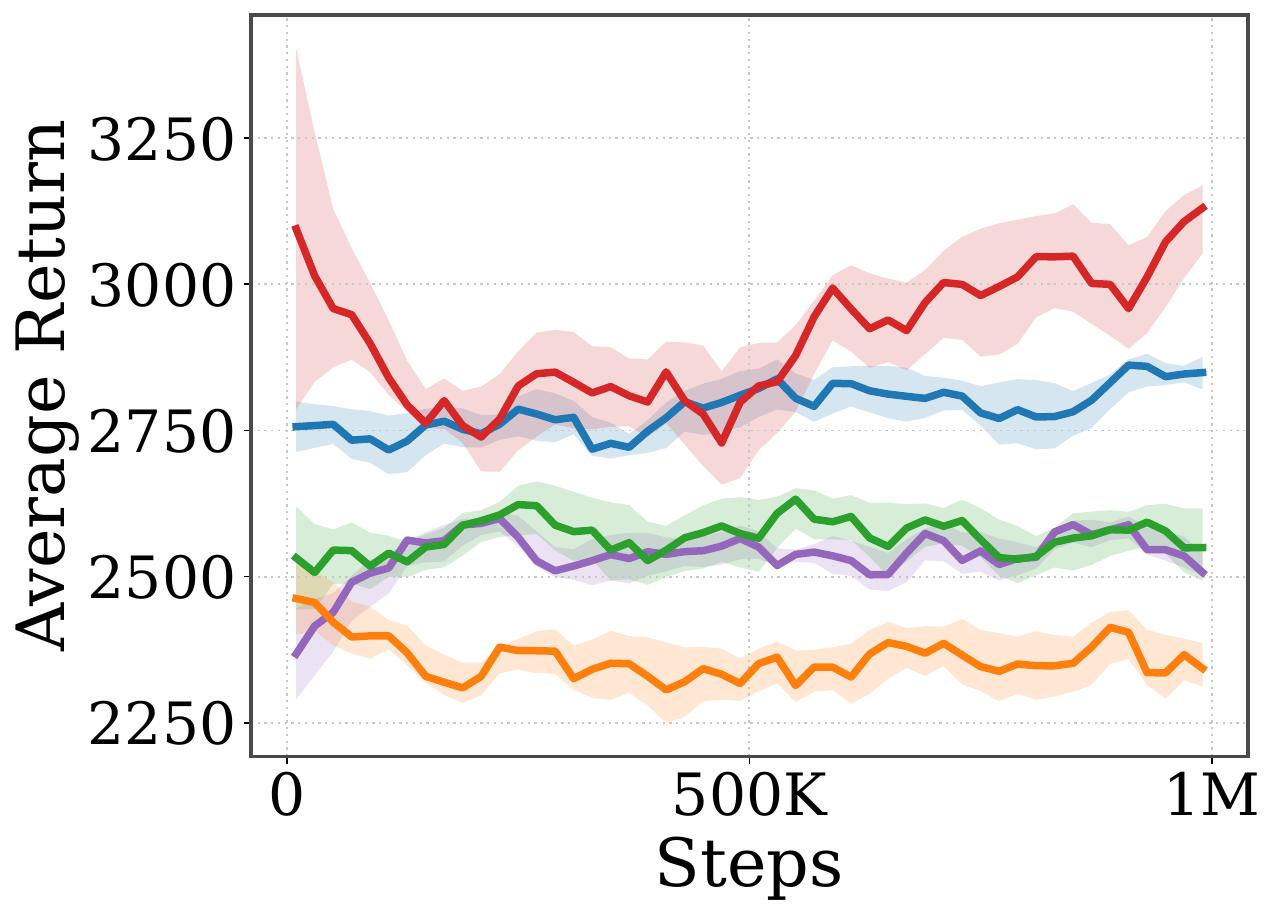}
    }
    \hfill
    \subfloat[Hopper 3x1 Medium]{%
        \includegraphics[width=0.24\textwidth]{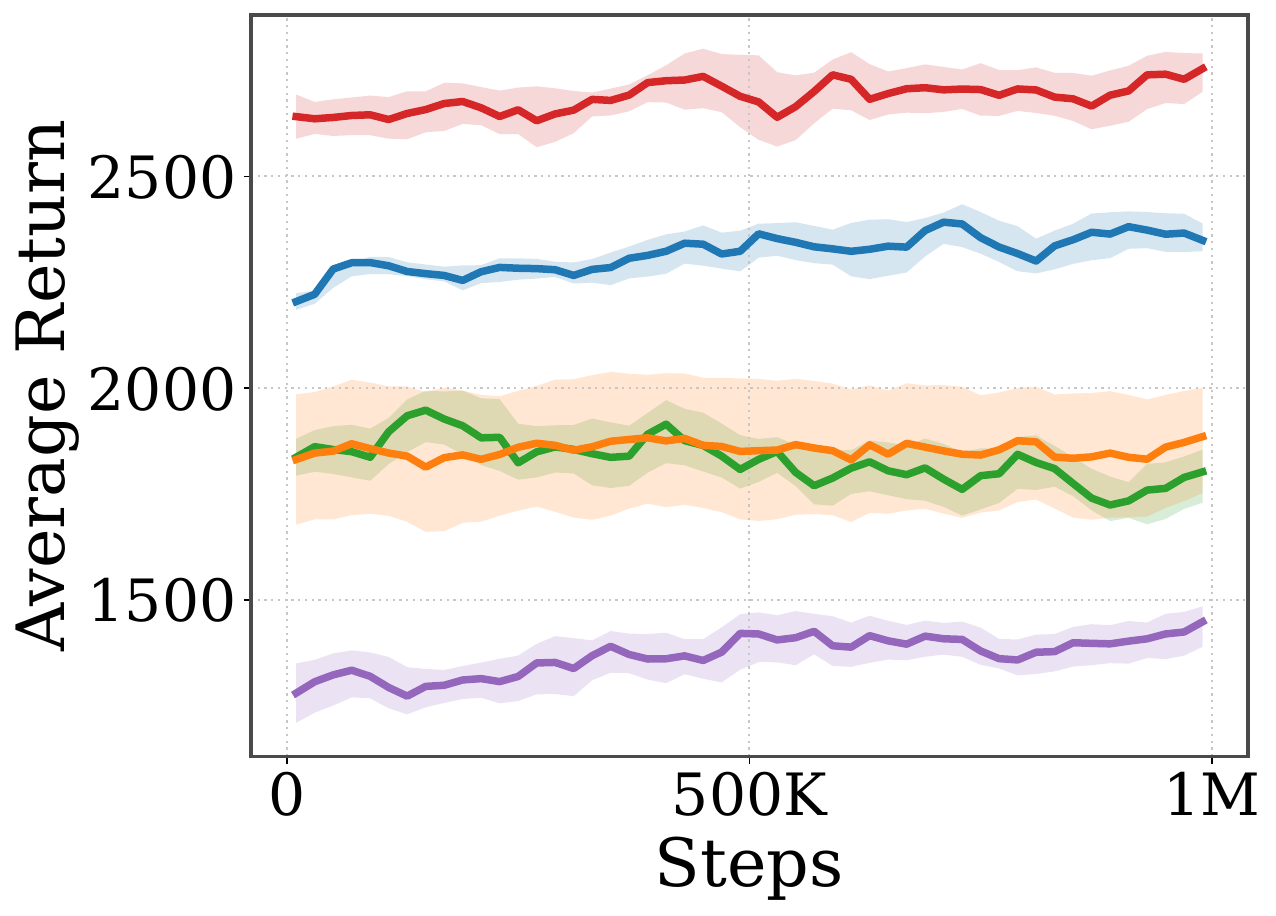}
    }
    \hfill
    \subfloat[Hopper 3x1 Med-Rep]{%
        \includegraphics[width=0.24\textwidth]{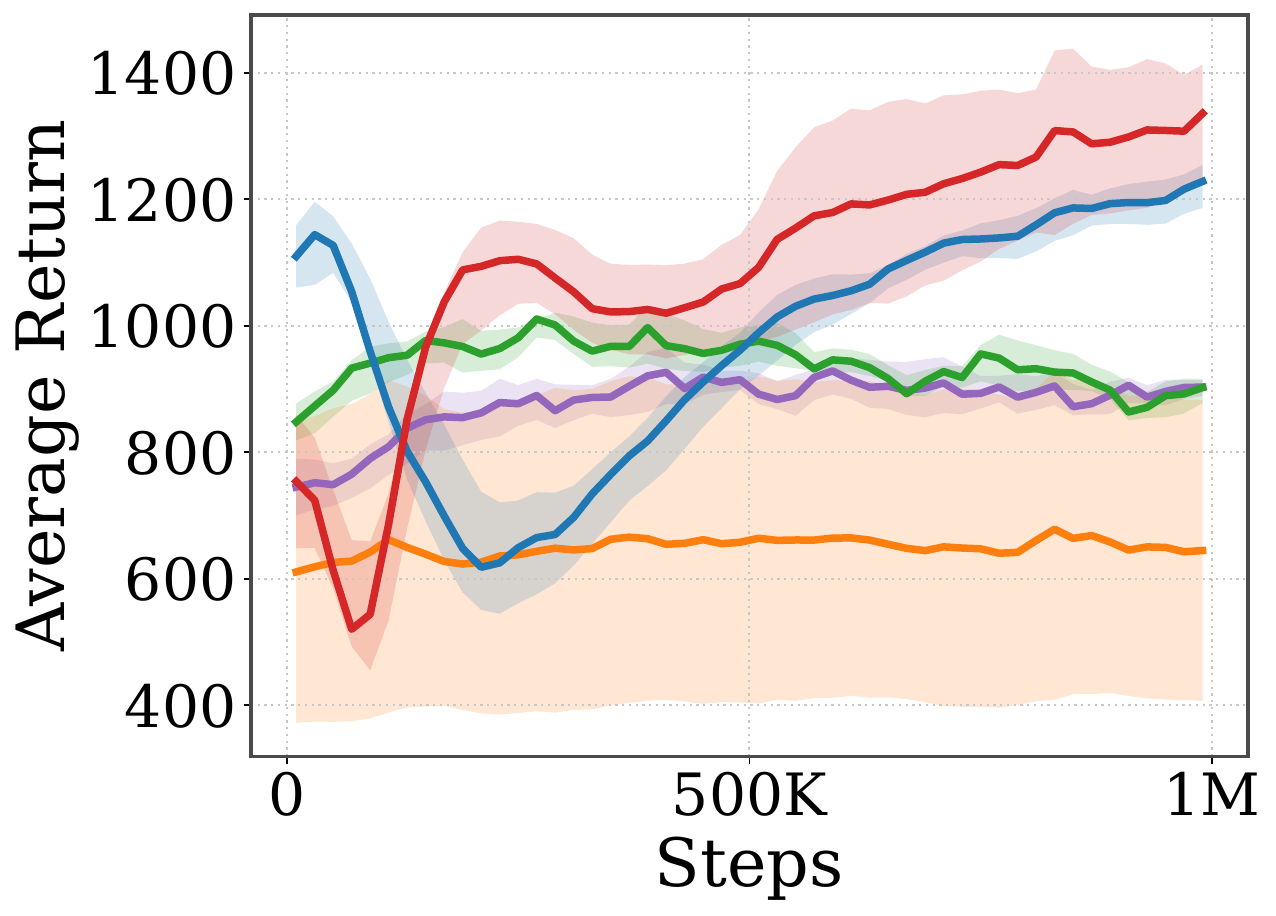}
    }
    \hfill
    \subfloat[Walker 6x1 Medium]{%
        \includegraphics[width=0.24\textwidth]{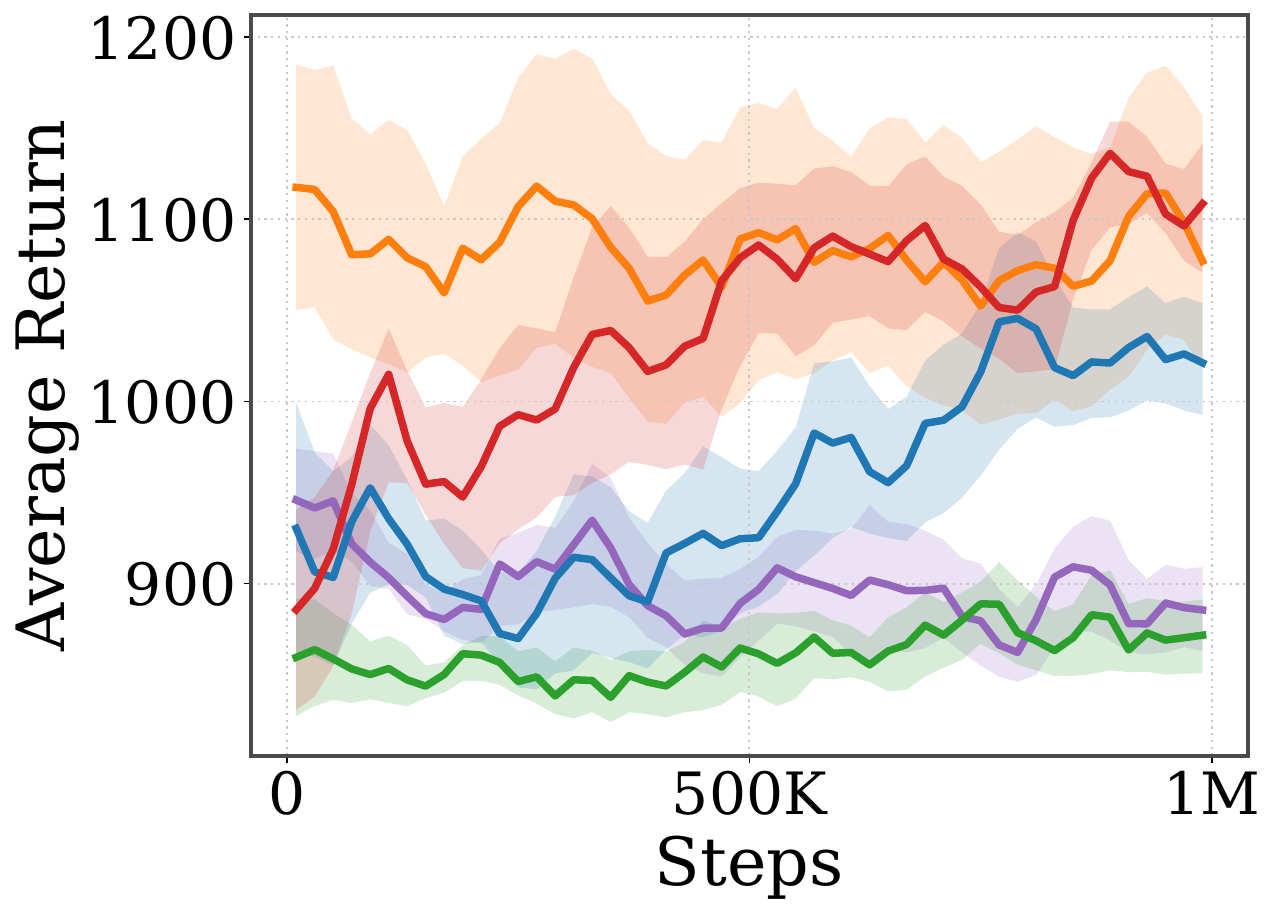}
    }

    \caption{Experimental results in MA-MuJoCo benchmarks, where HC, Med-Rep, and Med-Exp denote HalfCheetah, Medium-Replay, and Medium-Expert, respectively. The shaded region indicates the standard deviation across 5 seeds.}
    \label{fig:learning_curves}
\end{figure*}

\subsection{Probabilistic Pruning Mechanism}
\label{subsec:pruning}
We justify the mathematical design of the probabilistic pruning mechanism established in Eq.~\ref{eq:pruning} by casting it as an instance of constrained entropy-regularized value maximization. 
In contrast to rigid, deterministic top-$k$ selection, probabilistic pruning sustains crucial exploratory search diversity by sampling candidates from an optimization-derived distribution. 
This formulation effectively safeguards the active beam against premature modal collapse induced by localized value-estimation biases, ensuring that lower-scoring yet highly plausible joint-action profiles retain a non-zero survival probability. 
Formally, consider a single decision step at state $s$ over a finite candidate set $\mathcal{C} \subseteq \mathcal{A}_{\text{hyb}}$, and let $\Delta(\mathcal{C})$ denote the probability simplex over $\mathcal{C}$, where $P \in \Delta(\mathcal{C})$ represents the selection distribution.

\begin{definition}[Entropy-Regularized Objective]
\label{def:entropy_obj}
The optimal candidate selection policy is defined as the discrete distribution $P^*$ that maximizes the expected joint value function regularized by Shannon entropy:
\begin{equation}
    P^* = \operatorname*{argmax}_{P \in \Delta(\mathcal{C})} 
    \Big(
    \mathbb{E}_{\bm{a} \sim P} \left[ Q_{\text{tot}}(s, \bm{a}) \right] + \tau H(P)
    \Big),
\end{equation}
where $\tau > 0$ represents the temperature parameter, and $H(P) = - \sum_{\bm{a} \in \mathcal{C}} P(\bm{a} \mid s) \log P(\bm{a} \mid s)$ denotes the Shannon entropy of the selection profile.
\end{definition}

\begin{theorem}[Optimality of Boltzmann Distribution]
\label{thm:boltzmann_opt}
The unique closed-form solution to the constrained optimization problem articulated in Definition~\ref{def:entropy_obj} is identically the Boltzmann soft-max distribution:
\begin{equation}
    P^*(\bm{a}) = \frac{\exp\left(Q_{\text{tot}}(s, \bm{a})/\tau\right)}{\sum_{\bm{a}' \in \mathcal{C}} \exp\left(Q_{\text{tot}}(s, \bm{a}')/\tau\right)}.
\end{equation}
\end{theorem}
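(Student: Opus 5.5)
The plan is to treat Definition~\ref{def:entropy_obj} as a finite-dimensional concave maximization over the simplex $\Delta(\mathcal{C})$. We then identify the maximizer in one of two ways: through the Gibbs variational inequality, or, as a cross-check, through the KKT conditions. Write $Q(\bm{a}) := Q_{\text{tot}}(s,\bm{a})$ for $\bm{a}\in\mathcal{C}$. Define the partition function $Z := \sum_{\bm{a}'\in\mathcal{C}} \exp(Q(\bm{a}')/\tau)$ and the candidate distribution $P^\star(\bm{a}) := \exp(Q(\bm{a})/\tau)/Z$. Both are well defined because $\mathcal{C}$ is finite and nonempty and $\tau>0$.

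The key step is an exact rewriting of the objective for an arbitrary $P\in\Delta(\mathcal{C})$. Taking logarithms in the definition of $P^\star$ gives $Q(\bm{a}) = \tau\log P^\star(\bm{a}) + \tau\log Z$. Substituting this into $J(P) := \mathbb{E}_{P}[Q] + \tau H(P)$ and using $\sum_{\bm{a}} P(\bm{a}) = 1$, I expect to obtain
\begin{equation*}
J(P) = \tau\log Z - \tau\sum_{\bm{a}\in\mathcal{C}} P(\bm{a})\log\frac{P(\bm{a})}{P^\star(\bm{a})} = \tau\log Z - \tau\, D_{\mathrm{KL}}(P\,\|\,P^\star),
\end{equation*}
with the convention $0\log 0 = 0$. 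Since $P^\star$ has full support on $\mathcal{C}$, the KL term is finite for every $P$ in the simplex.

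Gibbs' inequality then gives $D_{\mathrm{KL}}(P\|P^\star)\ge 0$, with equality if and only if $P=P^\star$. This follows from strict convexity of $-\log$ via Jensen's inequality. Consequently, $J(P)\le \tau\log Z$, and the bound is attained uniquely at $P=P^\star$. This yields both the closed form and the uniqueness claimed in Theorem~\ref{thm:boltzmann_opt}, and it identifies the optimal value as $\tau\log Z$.

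The only delicate point is the boundary of the simplex, where some $P(\bm{a})=0$. A Lagrangian/KKT argument would require separately ruling out boundary optima, for instance by noting that $\partial H/\partial P(\bm{a})\to+\infty$ as $P(\bm{a})\to 0$. The KL rewriting sidesteps this issue because it holds on the closed simplex.

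As an optional confirmation, I would write the Lagrangian $\mathbb{E}_P[Q] + \tau H(P) + \lambda(1-\sum P)$. The stationarity condition $Q(\bm{a}) - \tau(\log P(\bm{a})+1) - \lambda = 0$ gives $P\propto \exp(Q/\tau)$. Together with strict concavity of $J$ (linear term plus $\tau$ times a strictly concave entropy), this reproduces the same unique maximizer.
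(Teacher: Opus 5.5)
Your proof is correct, but your main argument differs from the paper's. The paper's proof is essentially your ``optional confirmation.'' It forms the Lagrangian with only the normalization constraint (nonnegativity is never imposed) and solves the stationarity condition to obtain $P(\bm{a}) \propto \exp(Q_{\text{tot}}(s,\bm{a})/\tau)$. It then fixes the multiplier by normalization and invokes strict concavity of $\mathbb{E}_P[Q_{\text{tot}}] + \tau H(P)$ on the simplex to conclude that this stationary point is the unique global maximizer.

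Your Gibbs-variational argument instead verifies a guessed solution through the exact identity $J(P) = \tau\log Z - \tau D_{\mathrm{KL}}(P\,\|\,P^\star)$. This gives global optimality and uniqueness in one step on the closed simplex, with no appeal to differentiability or constraint qualifications.

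The paper's route has the merit of being constructive: it derives the exponential form rather than presupposing it. Its boundary issue is also milder than you suggest. The stationary point has full support, so it lies in the relative interior of $\Delta(\mathcal{C})$, and the concavity (supergradient) inequality then covers boundary points automatically. No separate boundary analysis is needed, though the paper leaves this implicit.

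A concrete bonus of your identity is that setting $P = P^\star$ gives $\mathbb{E}_{P^\star}[Q_{\text{tot}}] + \tau H(P^\star) = \tau\log Z$. This is exactly the log-sum-exp identity the paper later uses without proof in Part 2 of the proof of Theorem~\ref{thm:bound}.
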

\begin{proof}
    Please refer to Appendix~\ref{app:proof_boltzmann} for the step-by-step derivation.
\end{proof}
Theorem~\ref{thm:boltzmann_opt} demonstrates that the probabilistic pruning strategy executing in Eq.~\ref{eq:pruning} precisely achieves the theoretical optimum for entropy-regularized search space filtration.

\begin{table*}[t]
\centering
\caption{Average episode returns in MA-MuJoCo benchmarks after 1 million online steps (mean $\pm$ std over 5 seeds).}
\label{tab:main_results}
\resizebox{\textwidth}{!}{%
\begin{tabular}{@{}l|c|l|ccccc@{}}
\toprule
\multicolumn{1}{c|}{Task} &
\begin{tabular}[c]{@{}c@{}} Config\end{tabular} &
\multicolumn{1}{c|}{Dataset} &
RLPD-MA & AWAC-MA & PEX-MA & PROTO-MA & Sim2O (Ours) \\
\midrule
\multirow{7}{*}{Ant} & \multirow{2}{*}{2x4} & medium
& $1407.25 \pm 3.45$ & $1420.05 \pm 2.00$ & $1733.31 \pm 54.66$ & $1419.49 \pm 2.17$ & $\mathbf{1798.05 \pm 73.32}$ \\
 &  & medium-replay
& $1358.21 \pm 11.32$ & $1499.56 \pm 7.06$ & $1759.10 \pm 29.32$ & $970.35 \pm 1.00$ & $\mathbf{1822.72 \pm 35.76}$ \\
\cline{2-8}
 & \multirow{3}{*}{4x2} & medium
& $1429.42 \pm 12.41$ & $1484.84 \pm 25.33$ & $1510.89 \pm 10.09$ & $1420.60 \pm 12.83$ & $\mathbf{1586.60 \pm 20.67}$ \\
 &  & medium-expert
& $1605.23 \pm 16.07$ & $1566.56 \pm 35.09$ & $1650.58 \pm 20.27$ & $1269.24 \pm 45.78$ & $\mathbf{1689.52 \pm 24.99}$ \\
 &  & medium-replay
& $1200.57 \pm 45.24$ & $1614.57 \pm 40.72$ & $1744.76 \pm 60.34$ & $966.37 \pm 38.71$ & $\mathbf{2024.66 \pm 93.21}$ \\
\cline{2-8}
 & \multirow{2}{*}{8x1} & medium-expert
& $1774.35 \pm 66.53$ & $1793.74 \pm 39.37$ & $1743.99 \pm 27.27$ & $1459.45 \pm 91.94$ & $\mathbf{1847.32 \pm 47.72}$ \\
 &  & medium-replay
& $1373.63 \pm 23.14$ & $1451.45 \pm 42.34$ & $1434.28 \pm 12.48$ & $1222.66 \pm 43.35$ & $\mathbf{1546.51 \pm 12.34}$ \\
\midrule
\multirow{4}{*}{HalfCheetah} & \multirow{4}{*}{6x1} & expert
& $3379.37 \pm 230.99$ & $4958.51 \pm 58.07$ & $4779.57 \pm 148.60$ & $3611.60 \pm 308.45$ & $\mathbf{5073.96 \pm 136.93}$ \\
 &  & medium
& $2792.85 \pm 27.53$ & $3367.92 \pm 61.37$ & $3069.09 \pm 68.02$ & $2752.22 \pm 51.73$ & $\mathbf{3543.84 \pm 227.67}$ \\
 &  & medium-expert
& $3026.93 \pm 179.88$ & $4709.65 \pm 159.62$ & $4010.15 \pm 673.03$ & $3367.27 \pm 317.99$ & $\mathbf{4994.26 \pm 88.66}$ \\
 &  & medium-replay
& $2488.39 \pm 32.74$ & $2550.09 \pm 150.14$ & $2850.57 \pm 82.68$ & $2327.58 \pm 85.85$ & $\mathbf{3145.98 \pm 109.50}$ \\
\midrule
\multirow{2}{*}{Hopper} & \multirow{2}{*}{3x1} & medium
& $1464.15 \pm 100.68$ & $1810.82 \pm 141.81$ & $2337.35 \pm 54.49$ & $1894.33 \pm 270.28$ & $\mathbf{2771.12 \pm 78.34}$ \\
 &  & medium-replay
& $903.51 \pm 23.74$ & $908.89 \pm 29.24$ & $1236.39 \pm 80.17$ & $645.91 \pm 531.63$ & $\mathbf{1354.49 \pm 196.50}$ \\
\midrule
Walker2d & 6x1 & medium
& $884.70 \pm 53.77$ & $872.62 \pm 46.00$ & $1017.93 \pm 67.91$ & $1062.81 \pm 154.36$ & $\mathbf{1116.75 \pm 94.15}$ \\
\bottomrule
\end{tabular}%
}
\end{table*}

\begin{figure}[t]
    \centering
    \vspace{2mm}
    \includegraphics[width=0.9\columnwidth]{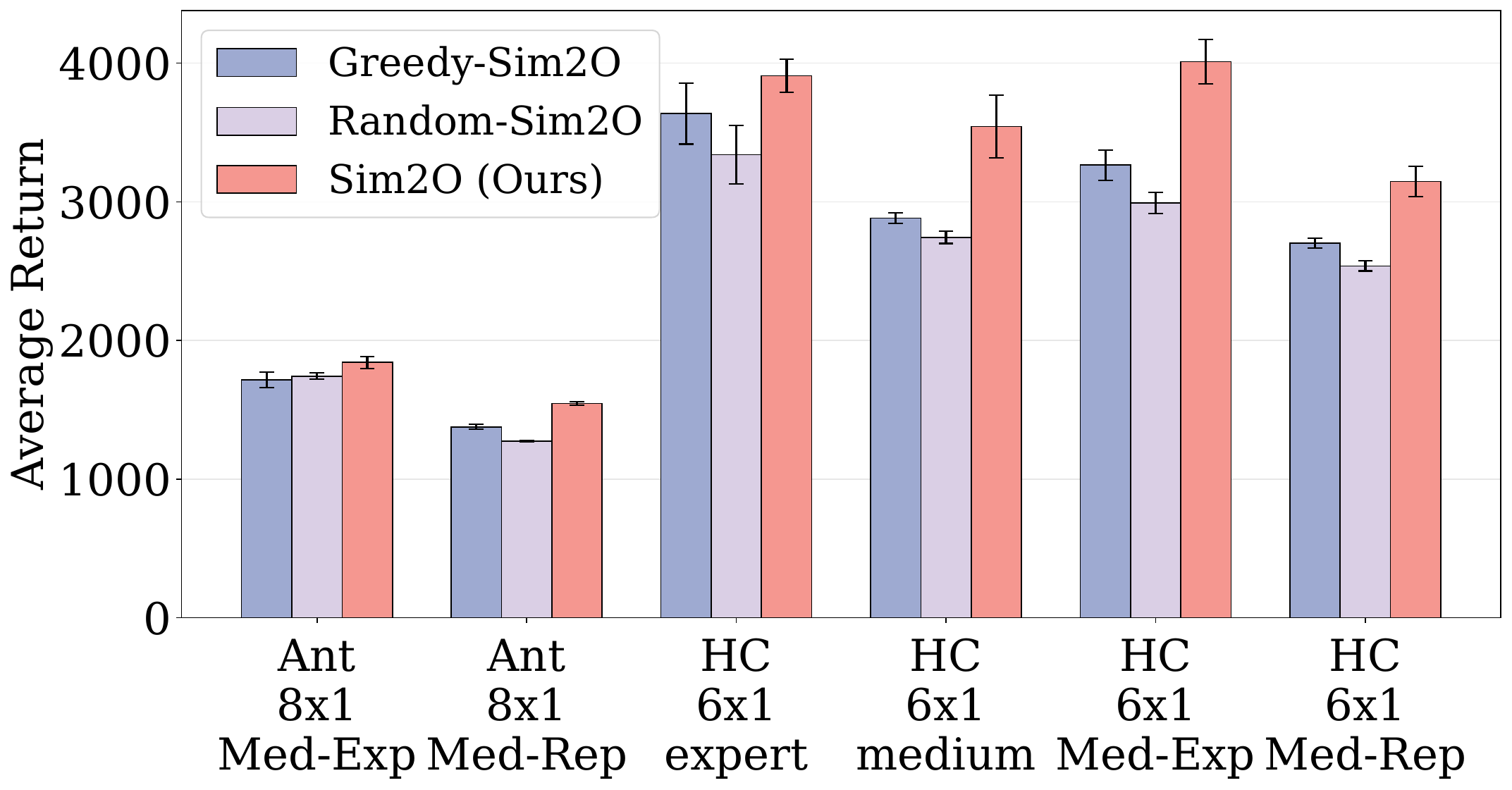}
    \caption{Experimental results of the module ablation for Sim2O. HC stands for HalfCheetah, Med-Rep stands for medium-replay, and Med-Exp stands for medium-expert.}
    \label{fig:module_ablation}
\end{figure}

\subsection{Suboptimality Analysis}
\label{subsec:bound}
We subsequently characterize the theoretical performance guarantees of Coordinated Beam Search by analyzing how closely its output approximates the global optimal solution within the fine-grained hybrid space $\mathcal{A}_{\text{hyb}}$. 
Specifically, we derive an analytical upper bound on the approximation gap, demonstrating that the suboptimality remains strictly controlled and asymptotically well-behaved. 
The total suboptimality error incurred by the joint action profile executed via CBS can be decomposed into two orthogonal sources: (i) \textit{search truncation error}, which measures whether the true global optimum survives the sequential pruning steps to remain in the final beam $\mathcal{B}_N$, and (ii) \textit{selection error}, which quantifies the expected value loss stemming from sampling from the randomized distribution over the beam rather than greedily exploiting the single best candidate. We formalize this structural error decomposition below.

Let $Q^* = \max_{\bm{a} \in \mathcal{A}_{\text{hyb}}} Q_{\text{tot}}(s, \bm{a})$ denote the global optimal value function. 
For the converged final beam set $\mathcal{B}_N$, let $P_{\text{sel}}$ represent the restricted Boltzmann distribution defined in Eq.~\ref{eq:pruning} conditioned exclusively on $\mathcal{B}_N$.
\begin{definition}[Error Decomposition]
\label{def:error_decomp}
The total suboptimality error $\varepsilon(k)$ is decomposed into the following additive terms:
\begin{equation}
\label{eq:error_decomp}
\varepsilon(k)
\;=\;
\varepsilon_{\text{search}}(k)
\;+\;
\varepsilon_{\text{prob}}(k),
\end{equation}
which are explicitly formulated as:
\begin{align}
\varepsilon_{\text{search}}(k)
&\;=\;
Q^* - \max_{\bm{a} \in \mathcal{B}_N} Q_{\text{tot}}(s, \bm{a}),
\\[8pt]
\varepsilon_{\text{prob}}(k)
&\;=\;
\max_{\bm{a} \in \mathcal{B}_N} Q_{\text{tot}}(s, \bm{a})
-
\mathbb{E}_{\bm{a} \sim P_{\text{sel}}}
\big[ Q_{\text{tot}}(s, \bm{a} \big].
\end{align}
Here, the search error $\varepsilon_{\text{search}}(k)$ captures the truncation gap between the global space and the preserved candidate subspace, while the selection error $\varepsilon_{\text{prob}}(k)$ isolates the expected performance penalty caused by the probabilistic sampling rule over $\mathcal{B}_N$.
\end{definition}

To establish the main bound, we assume that the global joint action-value function is coordinate-wise Lipschitz continuous. 
Specifically, let $L_Q \ge 0$ denote the Lipschitz constant such that for any state $s$ and any pair of joint action profiles $\bm{a}, \bm{a}' \in \mathcal{A}_{\text{hyb}}$ that differ by the action component of exactly one individual agent, we have:
\begin{equation}
\bigl| Q_{\text{tot}}(s, \bm{a}) - Q_{\text{tot}}(s, \bm{a}') \bigr| \le L_Q.
\end{equation}

\begin{theorem}[Suboptimality Bound]
\label{thm:bound}
The expected total suboptimality gap of Coordinated Beam Search satisfies the following upper bound:
\begin{equation}
    \mathbb{E}[\varepsilon(k)] \;\leq\; N^2 L_Q \left(1 - \frac{e^{-N L_Q/\tau}}{2k}\right)^k \;+\; \tau \log k.
    \label{eq:suboptimality_bound}
\end{equation}
\end{theorem}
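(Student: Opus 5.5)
We bound the two terms of Definition~\ref{def:error_decomp} separately and then add them via linearity of expectation, since Eq.~\ref{eq:error_decomp} is additive.

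\textbf{Selection error.} For the term $\varepsilon_{\text{prob}}(k)$ we invoke Theorem~\ref{thm:boltzmann_opt} on the candidate set $\mathcal{C}=\mathcal{B}_N$, which satisfies $|\mathcal{B}_N|\le k$. Since $P_{\text{sel}}$ maximizes $\mathbb{E}_P[Q_{\text{tot}}]+\tau H(P)$, comparing it with the point mass on $\bm{a}^\dagger=\arg\max_{\bm{a}\in\mathcal{B}_N}Q_{\text{tot}}(s,\bm{a})$, which has zero entropy, gives
\[
\mathbb{E}_{P_{\text{sel}}}[Q_{\text{tot}}]+\tau H(P_{\text{sel}})\ \ge\ Q_{\text{tot}}(s,\bm{a}^\dagger).
\]
Using $H(P_{\text{sel}})\le\log|\mathcal{B}_N|\le\log k$, this yields $\varepsilon_{\text{prob}}(k)\le\tau\log k$ deterministically. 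This step is routine.

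\textbf{Search error: the worst case.} The Lipschitz assumption immediately gives a crude bound. Any two profiles in $\mathcal{A}_{\text{hyb}}$ differ in at most $N$ coordinates, so by telescoping along a path of single-agent swaps, $|Q_{\text{tot}}(s,\bm{a})-Q_{\text{tot}}(s,\bm{a}')|\le NL_Q$. Hence $\varepsilon_{\text{search}}\le NL_Q\le N^2L_Q$ always. The stated factor $N^2L_Q$ is looser than this, which leaves room for the union-type argument below: we lose at most $N L_Q$ per stage, over $N$ stages. We write
\[
\mathbb{E}[\varepsilon_{\text{search}}]\ \le\ N^2L_Q\cdot\Pr[\text{the optimal path is lost}],
\]
so it remains to bound the failure probability by $\bigl(1-\tfrac{e^{-NL_Q/\tau}}{2k}\bigr)^k$.

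\textbf{Survival probability (main obstacle).} Fix an optimal $\bm{a}^*$ and consider its prefix-restriction at stage $j$: the candidate agreeing with $\bm{a}^*$ on the agents processed so far and equal to $\bm{a}_\theta$ elsewhere. Once this candidate is in $\mathcal{C}_j$, we bound the probability that one Boltzmann draw selects it. Every candidate value lies within $NL_Q$ of every other, so the ratio of any two weights is at least $e^{-NL_Q/\tau}$. Since $|\mathcal{C}_j|\le 2k$, a single draw picks the target with probability at least
\[
p\ \ge\ \frac{e^{-NL_Q/\tau}}{2k}.
\]
In sampling $k$ items without replacement, each successive draw renormalizes over a shrinking set, so the per-draw hit probability stays $\ge p$ until the target is drawn. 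Therefore the probability of missing it in all $k$ draws is at most $(1-p)^k$.

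The difficulty is compounding across the $N$ stages. A naive union bound over stages gives $N(1-p)^k$ rather than $(1-p)^k$, so I would absorb the extra factor $N$ into the prefactor: $N\cdot NL_Q=N^2L_Q$. This is presumably the intended source of the $N^2$. Concretely, I would bound $\varepsilon_{\text{search}}\le NL_Q\sum_j\mathbf{1}[\text{loss at stage }j]$, or alternatively the total failure probability by $N(1-p)^k$, and multiply by the worst-case gap $NL_Q$. Either route produces $N^2L_Q(1-p)^k$.

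The point I would check most carefully is the conditioning: the stagewise events are not independent, and the beam contents depend on earlier draws. I would handle this by conditioning on the filtration up to stage $j-1$ and noting that the lower bound on $p$ holds uniformly, so the per-stage miss probability is at most $(1-p)^k$ irrespective of the history. Adding the two bounds then gives Eq.~\ref{eq:suboptimality_bound}.
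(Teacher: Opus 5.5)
Your proposal is correct and follows the paper's proof essentially step for step: the same additive split, the per-stage Boltzmann lower bound $p\ge e^{-NL_Q/\tau}/(2k)$ from the $NL_Q$ value span and $|\mathcal{C}_j|\le 2k$, a union bound over the $N$ stages giving failure probability at most $N(1-p)^k$ times the worst-case gap $NL_Q$, and $\varepsilon_{\text{prob}}\le\tau H(P_{\text{sel}})\le\tau\log k$. Your point-mass comparison via Theorem~\ref{thm:boltzmann_opt} is just the variational form of the paper's log-sum-exp identity. Your renormalized-draw argument, showing the ancestor is still hit with probability at least $p$ at each draw, is more faithful to the without-replacement rule in Eq.~\ref{eq:pruning2} than the paper's with-replacement phrasing. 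One thing to fix: state the final accounting as $NL_Q\cdot N(1-p)^k$, not as $N^2L_Q\cdot\Pr[\text{lost}]$ with $\Pr[\text{lost}]\le(1-p)^k$ as your earlier paragraph suggests, since you never establish the latter.
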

\begin{proof}
    Please refer to Appendix~\ref{app:proof_gap} for the comprehensive mathematical derivation.
\end{proof}
Theorem~\ref{thm:bound} establishes a comprehensive, non-asymptotic guarantee for CBS. 
The first term, representing the search truncation boundary, monotonically contracts as the configured beam width $k$ scales up, indicating that expanding the beam capacity rapidly minimizes the risk of prematurely discarding high-value coordinated branches during the sequential agent-wise iterations. 
The second term, $\tau \log k$, arises directly from the entropy-regularized selection mechanism; it acts as an exploratory variance penalty that scales logarithmically with the beam size, elegantly highlighting the intrinsic exploration-exploitation trade-off modulated by the temperature parameter $\tau$.

\begin{figure}[t]
    \centering
    \vspace{2mm}
    \includegraphics[width=0.9\columnwidth]{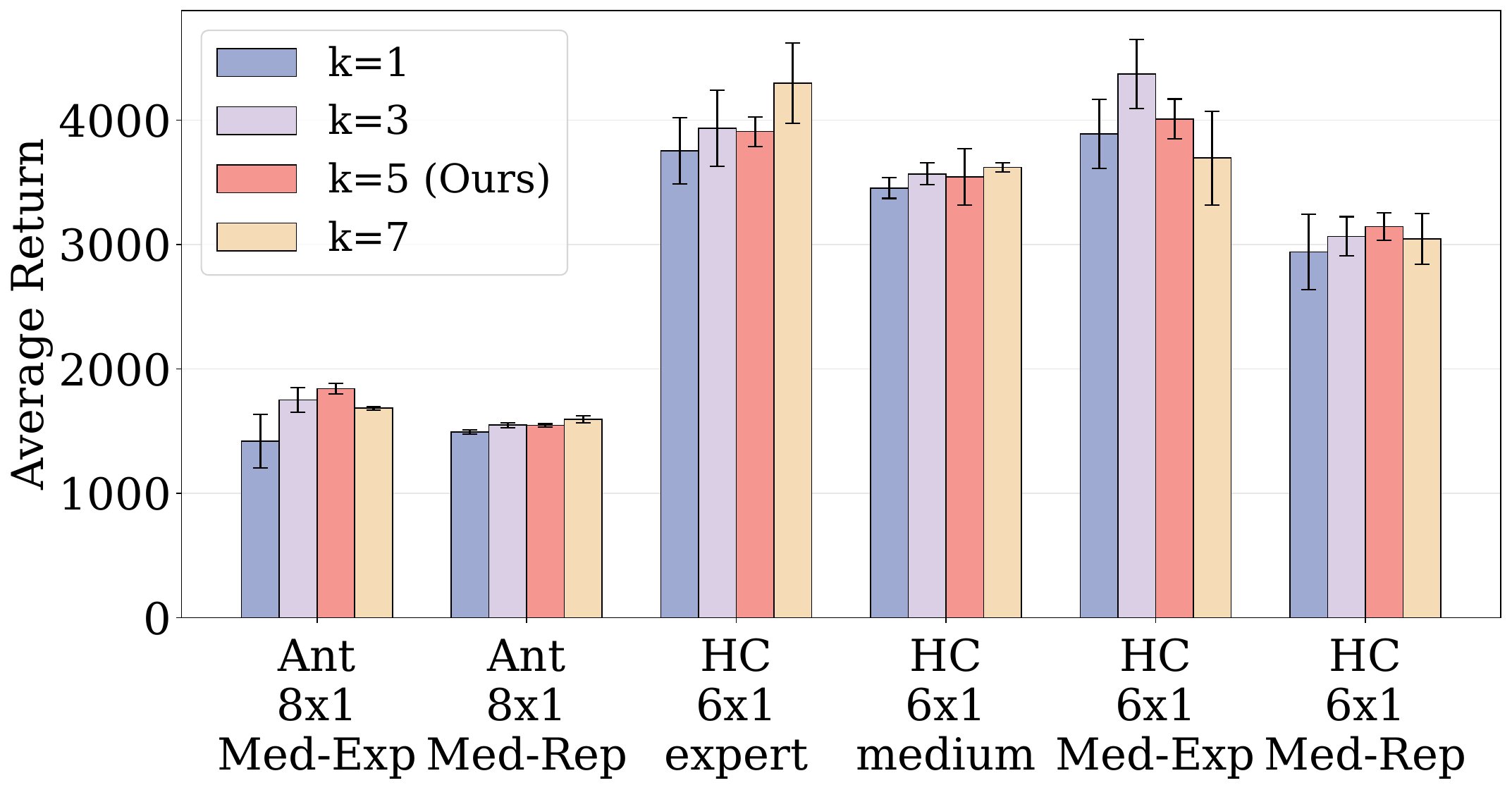}
    \caption{Experimental results of the beam width ablation for Sim2O by varying $k\in\{1,3,5,7\}$. HC stands for HalfCheetah, Med-Rep stands for medium-replay, and Med-Exp stands for medium-expert.}
    \label{fig:beam_width_ablation}
\end{figure}

\begin{table}[t]
\centering
\caption{Comparison with OMIGA-Finetune after 1 million online steps.}
\label{tab:ft_results}
\begin{tabular}{lcc}
\toprule
Environment & OMIGA-Finetune & Sim2O \\
\midrule
HC-6x1-medium
& $2499.59 \pm 47.34$
& $\textbf{3543.84} \pm \textbf{227.67}$ \\
HC-6x1-medium-replay
& $710.46 \pm 629.25$
& $\textbf{3145.98} \pm \textbf{109.50}$ \\
Ant-4x2-medium-replay
& $1187.17 \pm 60.34$
& $\textbf{2024.66} \pm \textbf{93.21}$ \\
Ant-8x1-medium-replay
& $1340.37 \pm 21.29$
& $\textbf{1546.51} \pm \textbf{12.34}$ \\
\bottomrule
\end{tabular}
\end{table}

\section{Experiments}
\label{sec:experiments}

In this section, we conduct an extensive empirical evaluation of Sim2O to address the following three core research questions: \textbf{Q1:} Does Sim2O achieve superior performance compared to state-of-the-art offline-to-online multi-agent baselines? \textbf{Q2:} What is the isolated algorithmic contribution of each individual component within Sim2O? \textbf{Q3:} Is the performance of Sim2O robust against diverse hyperparameter variations and environmental configurations?


\subsection{Experimental Setup}
\label{sec:setup}

\paragraph{Datasets.} 
As illustrated in Figure~\ref{fig:mamujoco_viz}, we evaluate our framework on the Multi-Agent MuJoCo benchmark \cite{dewitt2020decentralized}, specifically targeting high-dimensional continuous control tasks including \textit{Ant}, \textit{HalfCheetah}, \textit{Hopper}, and \textit{Walker2d}. To comprehensively evaluate the efficacy of offline-to-online adaptation, we construct offline trajectories from two distinct paradigms:
(1) \emph{Public Benchmarks}: We utilize the standard offline datasets curated by the OMIGA project \cite{wang2023globaltolocal}, which exhibit varying levels of data quality.
(2) \emph{Self-Collected Data}: To capture a wider spectrum of multi-agent coordination behaviors, we generate supplementary datasets utilizing HAPPO \cite{kuba2022happo}. To guarantee a rigorous evaluation across heterogeneous data distributions, our empirical framework spans four standard quality tiers: \textit{Medium}, \textit{Medium-Replay}, \textit{Medium-Expert}, and \textit{Expert}. Detailed dataset characteristics are provided in Appendix~\ref{app:datasets}.


\paragraph{Baselines.} 
To counteract the scarcity of dedicated offline-to-online MARL algorithms, we adapt four state-of-the-art single-agent offline-to-online counter-parts to the multi-agent paradigm. Specifically, we extend RLPD~\cite{ball2023rlpd}, AWAC~\cite{nair2020awac}, PEX~\cite{zhang2023pex}, and PROTO~\cite{li2023proto} to their multi-agent variants, denoted as \mbox{RLPD-MA}, \mbox{AWAC-MA}, \mbox{PEX-MA}, and \mbox{PROTO-MA}, respectively. To ensure a rigorous and equitable comparison, all baselines are executed under a decentralized training scheme equipped with shared replay buffers. Furthermore, during the offline pre-training phase, all methods employ OMIGA \cite{wang2023globaltolocal} to initialize policies on the static datasets, thereby guaranteeing a identical and high-quality behavioral initialization across all baselines. In the subsequent online adaptation phase, OMIGA serves as the underlying backbone for all implementations. We report the normalized episode return averaged across 5 independent random seeds, with exhaustive implementation details deferred to Appendix~\ref{app:implementation}.

\begin{table}[t]
\centering
\caption{Wall-clock statistics under different numbers of agents, where OMIGA-FT stands for OMIGA-Finetune}
\label{tab:wall_clock}
\begin{tabular}{llcc}
\toprule
Environment & Method & Action Time & Step Time \\
\midrule
\multirow{4}{*}{Ant-2x4-medium-replay}
& OMIGA-FT & $0.47 \pm 0.01$ & $7.42 \pm 0.22$ \\
& AWAC-MA     & $0.45 \pm 0.00$ & $6.30 \pm 0.05$ \\
& PEX-MA      & $1.50 \pm 0.01$ & $8.51 \pm 0.09$ \\
& Sim2O    & $2.09 \pm 0.02$ & $9.14 \pm 0.08$ \\
\midrule
\multirow{4}{*}{Ant-4x2-medium-replay}
& OMIGA-FT & $0.47 \pm 0.00$ & $12.19 \pm 0.13$ \\
& AWAC-MA     & $0.44 \pm 0.00$ & $5.95 \pm 0.02$ \\
& PEX-MA      & $1.42 \pm 0.01$ & $8.11 \pm 0.11$ \\
& Sim2O    & $3.45 \pm 0.02$ & $10.05 \pm 0.06$ \\
\midrule
\multirow{4}{*}{Ant-8x1-medium-replay}
& OMIGA-FT & $0.51 \pm 0.01$ & $13.19 \pm 0.39$ \\
& AWAC-MA     & $0.47 \pm 0.01$ & $6.46 \pm 0.18$ \\
& PEX-MA      & $1.51 \pm 0.01$ & $8.65 \pm 0.06$ \\
& Sim2O    & $6.74 \pm 0.11$ & $13.90 \pm 0.21$ \\
\bottomrule
\end{tabular}
\end{table}

\paragraph{Hyperparameters.}
\label{app:hyperparameters}
The core hyperparameter configurations for Sim2O and the baseline methods are formalized in Table~\ref{tab:hyperparams}. For the offline pre-training phase, we strictly adhere to the standardized protocols specified in the official OMIGA implementation. For the subsequent offline-to-online transition phase, we introduces three key algorithmic parameters: the beam width $k$, the pruning temperature $\tau$, and the offline-online replay mixing ratio $\rho$.

\begin{table}[t]
\centering
\caption{Average episode returns with various $\tau$.}
\label{tab:tau_sensitivity}
\begin{tabular}{lcc}
\toprule
Environment & $\tau$ & Episode return \\
\midrule
\multirow{3}{*}{HC-6x1-medium-replay}
& 1.0  & $3024.89 \pm 52.26$ \\
& 5.0  & $3495.16 \pm 7.28$ \\
& 10.0 & $3759.66 \pm 101.25$ \\
\midrule
\multirow{3}{*}{Ant-4x2-medium-replay}
& 1.0  & $1550.10 \pm 29.83$ \\
& 5.0  & $1900.82 \pm 138.67$ \\
& 10.0 & $1684.00 \pm 22.18$ \\
\bottomrule
\end{tabular}
\end{table}

\subsection{Main Results}
\label{sec:main_results}

\paragraph{Comparison with Baselines.}
We conduct comprehensive evaluations across diverse Multi-Agent MuJoCo tasks with heterogeneous dataset qualities and agent topologies. For instance, in the $\text{Ant-}4\times2$ task, the system is decomposed into 4 distinct agents, each controlling 2 joints. The empirical trajectories in Table~\ref{tab:main_results} and Figure~\ref{fig:learning_curves} demonstrate that Sim2O consistently delivers superior performance across all evaluated tasks, outperforming all baseline counterparts. We attribute this consistent superiority to our fine-grained agent-level filtering mechanism. This architectural design empowers individual agents to selectively preserve viable behavior primitives from the offline policy while aggressively exploring performance-enhancing trajectories, effectively decouples online adaptation from the baseline dataset quality limitations.

\paragraph{Comparison with OMIGA-Finetune.}
We further investigate whether a strong offline initialization is single-handedly sufficient for executing effective online adaptation. To this end, we contrast Sim2O against \mbox{OMIGA-Finetune}, which represents direct, unconstrained online fine-tuning from the pre-trained offline OMIGA checkpoint without any specialized adaptation constraints. As shown in Table~\ref{tab:ft_results}, direct continued fine-tuning from the offline checkpoint fails to yield reliable performance gains. This stark divergence suggests that the empirical benefits of Sim2O do not simply stem from a warm-started optimization baseline, but are uniquely driven by our proposed fine-grained hybrid action selection mechanism.

\paragraph{Wall-Clock Time Statistics.}
To evaluate the practical computational footprint of Sim2O, we report wall-clock runtime statistics. Here, \textit{Action Time} measures the average inference latency required for action selection per environment step, while \textit{Step Time} denotes the average total runtime per step, encapsulating action selection, environment interaction, and backpropagation updates. The empirical profiles are tabulated in Table~\ref{tab:wall_clock}. While the computational overhead of Sim2O scales approximately linearly with the agent count, the global runtime remains highly practical. Crucially, the growth rate of \textit{Step Time} is significantly smaller than that of \textit{Action Time}, demonstrating that the additional action-selection latency is sub-dominant and does not bottleneck the overall training loop.

\begin{table}[t]
\centering
\caption{Average episode returns on HalfCheetah-6x1-medium under different agent orderings.}
\label{tab:ordering_sensitivity}
\begin{tabular}{lc}
\toprule
Ordering & Episode return \\
\midrule
Forward & $3539.36 \pm 94.19$ \\
Reverse & $3623.12 \pm 30.28$ \\
Random & $3562.66 \pm 25.94$ \\
\bottomrule
\end{tabular}
\end{table}
\begin{table}[t]
\centering
\caption{Average episode returns on HalfCheetah-6x1-medium.}
\label{tab:beam_vs_exhaustive}
\begin{tabular}{lc}
\toprule
Method & Episode return \\
\midrule
Exhaustive selection & $3521.49 \pm 87.16$ \\
Sim2O & $3843.94 \pm 11.34$ \\
\bottomrule
\end{tabular}
\end{table}

\subsection{Ablation Study}
\label{sec:ablation study}

\paragraph{Module Ablation.}
We conduct systematic ablation studies to isolate the empirical contributions of critic-guidance and stochastic exploration outlined in Section~\ref{sec:cbs}. Specifically, we first replace the critic-guided beam search scoring function in Eq.~\ref{eq:pruning} with a random selection protocol, denoting this variant as \mbox{Random-Sim2O}. Second, we substitute the probabilistic pruning in Eq.~\ref{eq:pruning2} with a deterministic maximum-value selection, designated as \mbox{Greedy-Sim2O}. As illustrated in Figure~\ref{fig:module_ablation}, removing either architectural component triggers a consistent performance degradation across all tasks. These findings validate that both centralized critic coordination and stochastic exploration are imperative for robust offline-to-online adaptation in multi-agent environments.

\begin{table}[t]
\centering
\caption{Average episode returns under different offline pretraining steps.}
\label{tab:offline_pretraining_steps}
\begin{tabular}{lcc}
\toprule
Environment & Pretrain steps & Episode return \\
\midrule
\multirow{3}{*}{Ant-4x2-medium-replay}
& 1     & $1060.36 \pm 107.93$ \\
& 1000  & $1913.39 \pm 64.14$ \\
& 10000 & $2037.56 \pm 16.23$ \\
\midrule
\multirow{2}{*}{HalfCheetah-6x1-medium-replay}
& 1000   & $1192.26 \pm 1154.88$ \\
& 100000 & $3134.62 \pm 42.46$ \\
\bottomrule
\end{tabular}
\end{table}
\begin{table}[!t]
\centering
\caption{Average episode returns under different replay mixing ratios $\rho$.}
\label{tab:rho_sensitivity}
\begin{tabular}{lcc}
\toprule
Environment & $\rho$ & Episode return \\
\midrule
\multirow{3}{*}{Ant-4x2-medium-replay}
& 0.25 & $1735.82 \pm 194.43$ \\
& 0.5  & $1809.06 \pm 178.69$ \\
& 0.75 & $1517.59 \pm 55.55$ \\
\midrule
\multirow{3}{*}{HalfCheetah-6x1-medium-replay}
& 0.25 & $2394.11 \pm 887.12$ \\
& 0.5  & $1661.54 \pm 884.23$ \\
& 0.75 & $2269.06 \pm 1156.42$ \\
\bottomrule
\end{tabular}
\end{table}

\paragraph{Beam Width Ablation.}
A pivotal hyperparameter in Sim2O is the beam width $k$, which dictates the number of candidate hybrid joint actions retained during the coordinated beam search. To evaluate its sensitivity, we vary $k \in \{1, 3, 5, 7\}$. As shown in Figure~\ref{fig:beam_width_ablation}, increasing $k$ generally improves the final return by expanding the exploration frontier of hybrid joint compositions. However, these performance gains are non-monotonic; an excessively large beam width introduces optimization instability, which we hypothesize stems from value overestimation during the initial offline-to-online transition phase. Balancing this exploration-stability trade-off, we select $k=5$ as the unified default across all environments.

\paragraph{Temperature Sensitivity.}
The temperature parameter $\tau$ in Eq.~\ref{eq:pruning} modulates the stochasticity of the Boltzmann distribution used for probabilistic pruning and final action deployment. We evaluate $\tau \in \{1.0, 5.0, 10.0\}$ on the \mbox{HalfCheetah-6x1-medium-replay} and \mbox{Ant-4x2-medium-replay} tasks. The empirical outcomes summarized in Table~\ref{tab:tau_sensitivity} indicate that the optimal configuration of $\tau$ is environment-dependent, governing a critical balance between focusing on high-value candidates and preserving structural search diversity. Consequently, we deploy $\tau = 5.0$ as a unified default setting based on its robust performance across both tasks.

\paragraph{Search Strategy Ablation.}
To investigate whether the empirical edge of Sim2O merely arises from utilizing the centralized critic for action selection, we benchmark it against an exhaustive selection variant on \mbox{HalfCheetah-6x1-medium}. This variant enumerates the complete combinatorial hybrid joint action space and greedily executes the joint action that maximizes the estimated $Q_{\text{tot}}$. Both configurations are evaluated at 0.3 million online steps across 3 random seeds. Intriguingly, as shown in Table~\ref{tab:beam_vs_exhaustive}, Sim2O outperforms the exhaustive selection baseline. This result indicates that exhaustive optimization over the unconstrained hybrid action space severely exacerbates the critic's vulnerability to out-of-distribution (OOD) actions, thereby magnifying catastrophic value overestimation. In contrast, Coordinated Beam Search provides an implicit regularizing effect by progressively restricting the active candidate set, yielding substantially more reliable action selection during online adaptation.

\paragraph{Agent Ordering Ablation.}
Given that CBS expands hybrid candidates sequentially agent-by-agent, we verify whether the performance is sensitive to the chosen permutation order. We contrast forward, reverse, and dynamically randomized ordering on the \mbox{HalfCheetah-6x1-medium} task over 0.5 million online steps across 3 seeds. As demonstrated in Table~\ref{tab:ordering_sensitivity}, all three permutation strategies yield statistically indistinguishable performance. We therefore adopt random ordering by default to preclude any reliance on domain-specific structural orderings that may not generalize across environments.

\begin{table}[t]
    \centering
    \caption{Summary of hyperparameters employed in Sim2O. We adopt the standard OMIGA configuration for the offline phase and introduce specific parameters for the online fine-tuning stage.}
    \label{tab:hyperparams}
    \renewcommand{\arraystretch}{1.2}
    \begin{tabular}{lc}
        \toprule
        \textbf{Hyperparameter} & \textbf{Value} \\
        \midrule
        \multicolumn{2}{l}{\textit{Offline Phase}} \\ 
        \midrule
        Optimizer & Adam \\
        Learning Rate (Q-function) & $5 \times 10^{-4}$ \\
        Learning Rate (Actor) & $5 \times 10^{-4}$ \\
        Learning Rate (V-function) & $5 \times 10^{-4}$ \\
        Batch Size & 128 \\
        Discount Factor $\gamma$ & 0.99 \\
        Target Update Rate $\tau_{\text{soft}}$ & 0.005 \\
        Hidden Dimension & 256 \\
        Weight Network Hidden Dim. & 64 \\
        Regularization Parameter $\alpha$ (OMIGA) & 10  \\
        \midrule
        \multicolumn{2}{l}{\textit{Online Phase}} \\
        \midrule
        Beam Width $k$ & 5 \\ 
        Temperature $\tau$ & 5.0 \\
        Mixing Ratio $\rho$ & 0.5 \\
        Offline Policy Weights & Frozen \\
        Evaluation Episodes & 10 per checkpoint \\
        \bottomrule
    \end{tabular}
\end{table}

\paragraph{Offline Pre-training Ablation.}
The quality of the pre-trained critic directly dictates the reliability of value-guided search during the volatile early stages of online adaptation. To quantify this coupling, we vary the duration of offline pre-training steps prior to entering the online fine-tuning loop. Evaluations are conducted on \mbox{Ant-4x2-medium-replay} and \mbox{HalfCheetah-6x1-medium-replay}, with the final performance reported at 0.3 million online steps across 3 seeds. As evidenced by Table~\ref{tab:offline_pretraining_steps}, extended offline pre-training consistently correlates with superior online sample efficiency. This underscores that Sim2O critically benefits from an accurately regularized pre-trained $Q$-function, as the fidelity of the value-guided search hinges fundamentally on the precision of global value estimations.

\paragraph{Replay Mixing Ratio Ablation.}
Sim2O updates its network parameters using minibatches drawn from an interleaved mixture of offline and online experience, where the replay mixing ratio $\rho$ controls the relative gradient influence between the static dataset and active online exploration. We evaluate $\rho \in \{0.25, 0.5, 0.75\}$ on \mbox{Ant-4x2-medium-replay} and \mbox{HalfCheetah-6x1-medium-replay} for 0.5 million online steps over 3 seeds. As shown in Table~\ref{tab:rho_sensitivity}, Sim2O exhibits pronounced robustness across diverse mixing ratios, though the optimal trade-off point remains environment-specific. This demonstrates that $\rho$ acts as a regulatory knob balancing the retention of offline structural knowledge against the assimilation of novel online experiences. Accordingly, we maintain $\rho = 0.5$ as our standardized default.

\section{Conclusion}
In this paper, we investigated the largely unexplored challenge of offline-to-online adaptation in MARL. We proposed Sim2O, a principled yet straightforward framework that reformulates adaptation as a compositional process. By constructing hybrid joint actions from both offline and online proposals and evaluating them via a global value function, Sim2O efficiently identifies high-value team strategies without the need for complex auxiliary objectives. Our extensive experiments demonstrate that Sim2O significantly outperforms existing baselines, validating that a streamlined compositional approach is sufficient to bridge the gap between offline pre-training and online fine-tuning.


\bibliographystyle{IEEEtran}
\bibliography{references}

\appendix



\subsection{Proof of Theorem~\ref{thm:dominance}}
\label{app:proof_dominance}

\textbf{Theorem~\ref{thm:dominance}.}
\textit{For the true global joint action-value function $Q_{\text{tot}}(s,\bm{a})$, the optimization upper bound satisfies:
\begin{equation}
\max_{\bm{a}\in\mathcal{A}_{\text{hyb}}} Q_{\text{tot}}(s,\bm{a})
\;\ge\;
\max_{\bm{a}\in\mathcal{A}_{\text{sync}}} Q_{\text{tot}}(s,\bm{a}).
\end{equation}}

\begin{proof}
The proof proceeds by establishing a strict set-inclusion relation between the two joint action spaces under consideration. 
Recall from Definition~\ref{def:sync_space} that the synchronized joint action space is structured as $\mathcal{A}_{\text{sync}} = \{ \bm{a}_\beta, \bm{a}_\theta \}$, where $\bm{a}_\beta = \langle a^1_\beta, \dots, a^N_\beta \rangle$ and $\bm{a}_\theta = \langle a^1_\theta, \dots, a^N_\theta \rangle$ denote the homogeneous joint action vectors in which all individual agents uniformly execute actions derived from either the offline behavior policy or the active online policy, respectively. 
Conversely, the fine-grained hybrid joint action space $\mathcal{A}_{\text{hyb}} = \prod_{i=1}^{N} \mathcal{A}^i$ systematically encompasses all possible coordinate-wise combinations of individual alternative actions $a^i \in \{a^i_\beta, a^i_\theta\}$.

Consequently, by construction, both homogeneous profiles $\bm{a}_\beta$ and $\bm{a}_\theta$ constitute valid configurations within the complete Cartesian product space, yielding the fundamental subset inclusion relation:
\begin{equation}
\mathcal{A}_{\text{sync}} \subseteq \mathcal{A}_{\text{hyb}}.
\end{equation}
By the standard monotonic property of the supremum operator over nested sets, maximizing the global joint value function $Q_{\text{tot}}(s, \bm{a})$ over the superset $\mathcal{A}_{\text{hyb}}$ inherently yields a value no less than the localized optimization constrained within its subset $\mathcal{A}_{\text{sync}}$. The stated inequality therefore holds continuously, completing the proof.
\end{proof}

\subsection{Proof of Theorem~\ref{thm:boltzmann_opt}}
\label{app:proof_boltzmann}

\textbf{Theorem~\ref{thm:boltzmann_opt}.}
\textit{The unique closed-form solution to the constrained optimization problem articulated in Definition~\ref{def:entropy_obj} reduces exactly to the Boltzmann softmax distribution:
\begin{equation}
    P^*(\bm{a}) = \frac{\exp\left(Q_{\text{tot}}(s, \bm{a})/\tau\right)}{\sum_{\bm{a}' \in \mathcal{C}} \exp\left(Q_{\text{tot}}(s, \bm{a}')/\tau\right)}.
\end{equation}
}

\begin{proof}
We formalize the constrained optimization problem utilizing the method of Lagrange multipliers over the probability simplex. The corresponding Lagrangian function $\mathcal{L}(P, \lambda)$ is formulated as:
\begin{equation}
\mathcal{L}(P, \lambda) = \sum_{\bm{a} \in \mathcal{C}} P(\bm{a}) \big( Q_{\text{tot}}(s, \bm{a}) - \tau \log P(\bm{a}) \big) + \lambda \Big( \sum_{\bm{a} \in \mathcal{C}} P(\bm{a}) - 1 \Big),
\end{equation}
where $\lambda \in \mathbb{R}$ denotes the Lagrange multiplier associated with the normalization constraint.

Setting the first-order partial derivative of $\mathcal{L}(P, \lambda)$ with respect to each discrete probability coordinate $P(\bm{a})$ to zero yields the optimality condition:
\begin{equation}
\label{eq:fonc}
\frac{\partial \mathcal{L}}{\partial P(\bm{a})} = Q_{\text{tot}}(s, \bm{a}) - \tau \big( \log P(\bm{a}) + 1 \big) + \lambda = 0.
\end{equation}
Solving Eq.~\ref{eq:fonc} explicitly for $P(\bm{a})$ isolates the exponential functional form:
\begin{equation}
\label{eq:exponential_form}
P(\bm{a}) = \exp\left( \frac{\lambda - \tau}{\tau} \right) \exp\left( \frac{Q_{\text{tot}}(s, \bm{a})}{\tau} \right).
\end{equation}

To determine the scalar term wrapped by the multiplier $\lambda$, we invoke the fundamental simplex closure constraint $\sum_{\bm{a} \in \mathcal{C}} P(\bm{a}) = 1$. Summing both sides of Eq.~\ref{eq:exponential_form} over the entire candidate set $\mathcal{C}$ dictates:
\begin{equation}
\exp\left( \frac{\lambda - \tau}{\tau} \right) = \left( \sum_{\bm{a}' \in \mathcal{C}} \exp\left( \frac{Q_{\text{tot}}(s, \bm{a}')}{\tau} \right) \right)^{-1} \triangleq \frac{1}{Z(s)},
\end{equation}
where $Z(s)$ represents the state-dependent partition function. Substituting this normalizing factor back into Eq.~\ref{eq:exponential_form} directly evaluates to the optimal selection policy:
\begin{equation}
P^*(\bm{a}) = \frac{\exp\left( Q_{\text{tot}}(s, \bm{a}) / \tau \right)}{\sum_{\bm{a}' \in \mathcal{C}} \exp\left( Q_{\text{tot}}(s, \bm{a}') / \tau \right)}.
\end{equation}

Regarding uniqueness, we observe that the expected value term $\mathbb{E}_{\bm{a}\sim P}[Q_{\text{tot}}(s,\bm{a})]$ is strictly linear with respect to the density vector $P$, while the Shannon entropy $H(P) = -\sum_{\bm{a}} P(\bm{a})\log P(\bm{a})$ is strongly concave over its compact domain. Consequently, the combined objective function is strictly concave over the convex probability simplex $\Delta(\mathcal{C})$, guaranteeing that the stationary point $P^*$ is the unique global maximizer.
\end{proof}

\subsection{Proof of Theorem~\ref{thm:bound}}
\label{app:proof_gap}


\textbf{Theorem~\ref{thm:bound}.}
\textit{The expected total suboptimality gap of Coordinated Beam Search satisfies the following upper bound:
\begin{equation}
\mathbb{E}[\varepsilon(k)] \;\leq\; N^2 L_Q \left(1 - \frac{e^{-N L_Q/\tau}}{2k}\right)^k \;+\; \tau \log k.
\label{eq:suboptimality_bound}
\end{equation}}

\begin{proof}
We initiate the proof by recalling the structural error decomposition introduced in Eq.~\ref{eq:error_decomp}:
\begin{equation}
\varepsilon(k) = \varepsilon_{\text{search}}(k) + \varepsilon_{\text{prob}}(k).
\end{equation}
Due to the stochastic nature of the sequential sampling process in Coordinated Beam Search, the resulting optimality gap $\varepsilon(k)$ constitutes a random variable. Consequently, by invoking the linearity of expectation, we obtain:
\begin{equation}
\mathbb{E}[\varepsilon(k)] = \mathbb{E}[\varepsilon_{\text{search}}(k)] + \mathbb{E}[\varepsilon_{\text{prob}}(k)],
\end{equation}
where the constituent terms associated with the final candidate beam $\mathcal{B}_N$ are explicitly defined as follows:
\begin{align}
\varepsilon_{\text{search}}(k) &= Q^* - \max_{\bm{a} \in \mathcal{B}_N} Q_{\text{tot}}(s, \bm{a}), \\
\varepsilon_{\text{prob}}(k) &= \max_{\bm{a} \in \mathcal{B}_N} Q_{\text{tot}}(s, \bm{a}) - \mathbb{E}_{\bm{a} \sim P_{\text{sel}}}[Q_{\text{tot}}(s,\bm{a})].
\end{align}

\paragraph{Part 1: Bounding the Search Truncation Error $\mathbb{E}[\varepsilon_{\text{search}}(k)]$}
The search error $\varepsilon_{\text{search}}(k)$ is non-zero if and only if the global optimal joint action $\bm{a}^*$ is prematurely excluded from the final candidate set $\mathcal{B}_N$. We formally define this event as a \emph{search failure} and denote its probability by $P(\text{search failure})$.

Let $\langle z_0, \dots, z_N \rangle$ denote the sequence of decision ancestors corresponding to $\bm{a}^*$ within the combinatorial search tree. At each agent-wise step $j$, for $\bm{a}^*$ to be preserved throughout the optimization process, its prefix ancestor $z_j$ must be successfully selected into the beam $\mathcal{B}_j$ from the expanded candidate set $\mathcal{C}_j$. This selection probability is governed by the Boltzmann distribution:
\begin{equation}
P(z_j) = \frac{\exp\left(Q_{\text{tot}}(s, z_j)/\tau\right)}{\sum_{\bm{a}' \in \mathcal{C}_j} \exp\left(Q_{\text{tot}}(s, \bm{a}')/\tau\right)}.
\end{equation}
Let $Q_{\max, j} = \max_{\bm{a}' \in \mathcal{C}_j} Q_{\text{tot}}(s, \bm{a}')$ represent the local maximum within the current candidate set. It is worth noting that the candidate $\bm{a}'$ achieving this maximum and the true optimal ancestor $z_j$ may belong to distinct coordinates of the search tree and can differ by up to $N$ agents' actions. However, invoking the Lipschitz property of the value function, a single-agent deviation alters the joint Q-value by at most $L_Q$. 

Consequently, by the triangle inequality, the maximum value discrepancy between any two joint actions within the local space is bounded by $N L_Q$. Therefore, the energy gap in the Boltzmann exponent satisfies:
\begin{equation}
Q_{\max, j} - Q_{\text{tot}}(s, z_j) \le N L_Q.
\end{equation}
Substituting this upper bound into the density expression yields the following lower bound for tracking the optimal trajectory:
\begin{equation}
P(z_j) \ge \frac{1}{2k} \exp\left( - \frac{Q_{\max, j} - Q_{\text{tot}}(s, z_j)}{\tau} \right) \ge \frac{1}{2k} e^{-N L_Q/\tau}.
\end{equation}
The probability that $z_j$ is excluded after $k$ independent parallel draws (with replacement) is at most $(1 - P(z_j))^k \le (1 - \frac{1}{2k}e^{-N L_Q/\tau})^k$. Applying the union bound over the $N$ sequential decision steps, the total probability of a search failure is bounded by:
\begin{equation}
P(\text{search failure}) \le N \left( 1 - \frac{e^{-N L_Q/\tau}}{2k} \right)^k.
\end{equation}
In the event of a search failure, the selected joint action can diverge from the global optimum by the choices of at most $N$ agents. Since each single-agent deviation contributes at most $L_Q$ to the value difference, the resulting suboptimality gap is bounded by the global value span $N L_Q$. Taking the expectation over this boundary yields:
\begin{equation}
\begin{aligned}
\mathbb{E}[\varepsilon_{\text{search}}(k)]
&\le P(\text{search failure}) \cdot (N L_Q) \\
&\le N^2 L_Q \left( 1 - \frac{e^{-N L_Q/\tau}}{2k} \right)^k.
\end{aligned}
\label{eq:bound_part1_revised}
\end{equation}

\paragraph{Part 2: Bounding the Selection Variance Error $\mathbb{E}[\varepsilon_{\text{prob}}(k)]$}
We proceed to bound the discrepancy between the maximum value preserved within the beam and the expected value of the deployed action. We utilize a fundamental information-theoretic identity relating the expected value under the Boltzmann distribution $P_{\text{sel}}$ to the log-sum-exp function (i.e., the soft maximum):
\begin{equation}
\label{eq:app_identity_revised}
\mathbb{E}_{\bm{a} \sim P_{\text{sel}}}[Q_{\text{tot}}(s,\bm{a})] = \tau \log \left( \sum_{\bm{a}\in \mathcal{B}_N} \exp\left(\frac{Q_{\text{tot}}(s,\bm{a})}{\tau}\right) \right) - \tau H(P_{\text{sel}}),
\end{equation}
where $H(P_{\text{sel}})$ denotes the Shannon entropy of the sampling policy.

The log-sum-exp term is naturally lower-bounded by its maximal scalar element. Defining the localized greedy optimal value as $Q_B^\star = \max_{\bm{a} \in \mathcal{B}_N} Q_{\text{tot}}(s, \bm{a})$, we have:
\begin{equation}
\sum_{\bm{a}\in \mathcal{B}_N}\exp\left(\frac{Q_{\text{tot}}(s,\bm{a})}{\tau}\right) \ge \exp\left(\frac{Q_B^\star}{\tau}\right).
\end{equation}
Applying the monotonic logarithm transformation and scaling by the temperature $\tau$ yields:
\begin{equation}
\tau \log \left( \sum_{\bm{a}\in \mathcal{B}_N}\exp\left(\frac{Q_{\text{tot}}(s,\bm{a})}{\tau}\right) \right) \ge \tau \log \exp\left(\frac{Q_B^\star}{\tau}\right) = Q_B^\star.
\end{equation}
Substituting this inequality back into Eq.~\ref{eq:app_identity_revised} results in:
\begin{equation}
\begin{split}
&\mathbb{E}_{\bm{a} \sim P_{\text{sel}}}[Q_{\text{tot}}(s,\bm{a})] \ge Q_B^\star - \tau H(P_{\text{sel}}) \implies\\ &\underbrace{Q_B^\star - \mathbb{E}_{\bm{a} \sim P_{\text{sel}}}[Q_{\text{tot}}(s,\bm{a})]}_{\varepsilon_{\text{prob}}(k)} \le \tau H(P_{\text{sel}}).   
\end{split}
\end{equation}
This directly establishes the deterministic upper bound $\varepsilon_{\text{prob}}(k) \le \tau H(P_{\text{sel}})$.

Recall that for any discrete distribution $P_{\text{sel}}$ supported on a finite set $\mathcal{B}_N$, the Shannon entropy is strictly maximized by the uniform distribution, which is bounded specifically by $\log |\mathcal{B}_N|$. Given that the operational beam size is constrained by the width parameter $k$ (i.e., $|\mathcal{B}_N| \le k$), it follows that:
\begin{equation}
H(P_{\text{sel}}) \le \log |\mathcal{B}_N| \le \log k.
\end{equation}
Taking the expectation across this boundary yields the targeted selection error bound:
\begin{equation}
\mathbb{E}[\varepsilon_{\text{prob}}(k)] \le \tau \log k.
\label{eq:prob_bound2_revised}
\end{equation}

Finally, by combining the search error bound from Eq.~\ref{eq:bound_part1_revised} with the selection error bound in Eq.~\ref{eq:prob_bound2_revised}, we arrive at the finalized expected total suboptimality gap:
\begin{equation}
\mathbb{E}[\varepsilon(k)] \le N^2 L_Q \left(1 - \frac{e^{-N L_Q/\tau}}{2k}\right)^k + \tau \log k.
\end{equation}
The proof is now complete.
\end{proof}

\subsection{Multi-Agent MuJoCo Benchmark}
\label{app:mamujoco}

Multi-Agent MuJoCo~\cite{dewitt2020decentralized} is a standard continuous multi-agent control benchmark wherein a single robotic system is decomposed into multiple cooperative agents, each controlling a localized subset of joints. 
In this study, we evaluate our method on four representative environments with various agent topologies: 
\textit{Ant} ($4\times2$, $8\times1$), 
\textit{HalfCheetah} ($2\times3$, $6\times1$), 
\textit{Hopper} ($3\times1$), 
and \textit{Walker2d} ($6\times1$), 
where the notation $a\times b$ specifies a configuration of $a$ cooperative agents each controlling $b$ joints. Across all tasks, the observation range hyperparameter \texttt{agent\_obsk} is explicitly set to $1$, thereby restricting each individual agent's partial observation to its immediate structural neighborhood.

\subsection{Offline Dataset Construction}
\label{app:datasets}

\subsubsection{Dataset Sources}

The offline datasets utilized in this study are derived from two primary paradigms. First, we incorporate the standardized public offline datasets released by the OMIGA benchmark~\cite{wang2023globaltolocal}. Second, to validate our framework against broader behavioral distributions, we augment these benchmarks with supplementary datasets collected by interacting with the environment via the HAPPO algorithm~\cite{kuba2022happo}.

For the data collection pipeline, we employ the official implementation of HAPPO, adhering strictly to the default hyperparameter configurations without any environment-specific optimization. The specific dataset compositions are instantiated as follows:
\begin{itemize}
    \item \textbf{Expert}: Generated by extracting trajectories from a fully converged HAPPO policy.
    \item \textbf{Medium}: Obtained by early-stopping the training process, thereby capturing the behavioral characteristics of a partially trained policy.
    \item \textbf{Medium-Replay}: Comprises the entire history of transitions logged in the replay buffer throughout the intermediate training phase.
    \item \textbf{Medium-Expert}: Constructed by mixing equal proportions of trajectories from the \textit{Medium} and \textit{Expert} policies.
\end{itemize}

\subsubsection{Dataset Definitions}

We closely follow the standard D4RL taxonomy~\cite{fu2020d4rl} and adopt the following terminology across our empirical evaluation: \textit{Medium}, \textit{Medium-Replay}, \textit{Medium-Expert}, and \textit{Expert}. These designations correspond to trajectories sourced from partially optimized policies, historical training replay buffers, mixed-quality policy rollouts, and fully converged behavioral networks, respectively.

\subsection{Implementation Details}
\label{app:implementation}

\subsubsection{Sim2O Implementation}
\label{app:sim2o_impl}

We implement Sim2O directly within the OMIGA codebase, preserving the core structural backbone to ensure architectural consistency. The policy, local $Q$-value, and state-value networks are parameterized as three-layer Multi-Layer Perceptrons (MLPs) with 256 hidden units per layer and Rectified Linear Unit (ReLU) activations. For cooperative value decomposition, the mixing network is structured as a two-layer MLP with 64 hidden units per layer, also employing ReLU activations. All network parameters are optimized using the Adam optimizer.

During the offline-to-online adaptation phase, the pre-trained offline policy parameters are explicitly frozen. In contrast, the online policy and joint value networks are dynamically updated via minibatches sampled from a unified replay buffer that aggregates an interleaved mixture of offline and online data, adhering to the formal training protocol delineated in the main text.

\subsubsection{Baseline Implementations}
\label{app:baseline_impl}

We benchmark Sim2O against four prominent offline-to-online baseline algorithms: PEX~\cite{zhang2023pex}, AWAC~\cite{nair2020awac}, RLPD~\cite{ball2023rlpd}, and PROTO~\cite{li2023proto}. To establish a rigorous multi-agent comparison, we extend these algorithms to their multi-agent counterparts, denoted as \mbox{PEX-MA}, \mbox{AWAC-MA}, \mbox{RLPD-MA}, and \mbox{PROTO-MA}. For an equitable empirical foundation, all baselines are integrated into the unified OMIGA framework, sharing identical network capacities, optimization algorithms, replay buffer protocols, and training schedules unless explicitly stated otherwise.

The specific operational mechanics of each baseline extension are detailed below:
\begin{itemize}
    \item \textbf{\mbox{PEX-MA}}: Dynamically switches between the pre-trained offline policy and the evolving online policy at each environmental decision step, enforcing the selected policy choice uniformly across all cooperative agents.
    \item \textbf{\mbox{AWAC-MA}}: Replaces the standard policy update objective with the Advantage-Weighted Actor-Critic loss. This modification entails removing OMIGA-specific value regularizations and completely omitting the state-value network branch.
    \item \textbf{\mbox{RLPD-MA}}: Adheres to the classic offline-to-online fine-tuning pipeline, initializing both the policy and critic networks from the pre-trained offline checkpoint and subsequently updating them using a shared replay buffer with a balanced mix of offline and active online experiences.
    \item \textbf{\mbox{PROTO-MA}}: Augments the policy optimization objective with a prototype-based Kullback-Leibler (KL) regularization term derived from the offline data distribution, introducing an auxiliary reference policy that is tracked via an exponential moving average (EMA).
\end{itemize}

\FloatBarrier

\end{document}